\documentclass{article}

\usepackage{PRIMEarxiv}

\usepackage[utf8]{inputenc} 
\usepackage[T1]{fontenc}    
\usepackage{hyperref}       
\usepackage{url}            
\usepackage{booktabs}       
\usepackage{amsfonts}       
\usepackage{nicefrac}       
\usepackage{microtype}      
\usepackage{lipsum}
\usepackage{graphicx}
\graphicspath{{media/}}     
\usepackage[caption=false,font=footnotesize]{subfig}
\usepackage{comment}
\usepackage{tikz}
\usepackage{hyperref}
\hypersetup{
    colorlinks=true,
    linkcolor=blue,
    filecolor=magenta,      
    urlcolor=cyan,
    citecolor=blue,
    }
\usepackage{url}
\usepackage{xcolor}
\usepackage{wrapfig}
\usepackage[caption=false,font=footnotesize]{subfig}
\makeatletter
\newcommand*{\rom}[1]{\expandafter\@slowromancap\romannumeral #1@}
\makeatother
\usepackage{amsmath}
\usepackage{algpseudocode}
\usepackage{amsthm}
\usepackage{dsfont}
\usepackage{algorithm}
\usepackage{thmtools}
\usepackage{amssymb}





\def\mbb{\mathbf{b}}

\def\mbe{\mathbf{e}}

\def\mbn{\mathbf{n}}

\def\mbu{\mathbf{u}}

\def\mbx{\mathbf{x}}
\def\mby{\mathbf{y}}

\def\mbA{\mathbf{A}}
\def\mbB{\mathbf{B}}

\def\mbT{\mathbf{T}}

\def\mbX{\mathbf{X}}
\def\mbY{\mathbf{Y}}








\newtheorem{theorem}{Theorem}

\theoremstyle{definition}

\algnewcommand\algorithmicinput{\textbf{Input:}}
\algnewcommand\Input{\item[\algorithmicinput]}
\algnewcommand\algorithmicoutput{\textbf{Output:}}
\algnewcommand\Output{\item[\algorithmicoutput]}
\algnewcommand\algorithmicinit{\textbf{Initialize:}}
\algnewcommand\Init{\item[\algorithmicinit]}


\definecolor{pleasant_red}{HTML}{FFB6C1}  
\definecolor{golden_color}{HTML}{FFD700}   
\definecolor{spring_green}{HTML}{00FF7F}   
\title{Data-Aware Training Quality Monitoring and Certification for Reliable Deep Learning}

\author{
  Farhang Yeganegi$~^{*}$ \\
  Department of Electrical and Computer Engineering \\
  University of Illinois Chicago \\
  Chicago, IL, USA \\
  \texttt{fyegan2@uic.edu} \\
  \And Arian Eamaz \thanks{The first two authors contributed equally to this work.}\\
  Department of Electrical and Computer Engineering \\
  University of Illinois Chicago \\
  Chicago, IL, USA \\
  \texttt{aeamaz2@uic.edu} \\
  \And Mojtaba Soltanalian \\
  Department of Electrical and Computer Engineering \\
  University of Illinois Chicago \\
  Chicago, IL, USA \\
  \texttt{msol@uic.edu} \\
}

\begin{document}
\maketitle

\begin{abstract}
Deep learning models excel at capturing complex representations through sequential layers of linear and non-linear transformations, yet their inherent black-box nature and multi-modal training landscape raise critical concerns about reliability, robustness, and safety, particularly in high-stakes applications. To address these challenges, we introduce YES training bounds, a novel framework for real-time, data-aware certification and monitoring of neural network training. The YES bounds evaluate the efficiency of data utilization and optimization dynamics, providing an effective tool for assessing progress and detecting suboptimal behavior during training. Our experiments show that the YES bounds offer insights beyond conventional local optimization perspectives, such as identifying when training losses plateau in suboptimal regions. Validated on both synthetic and real data, including image denoising tasks, the bounds prove effective in certifying training quality and guiding adjustments to enhance model performance. By integrating these bounds into a color-coded cloud-based monitoring system, we offer a powerful tool for real-time evaluation, setting a new standard for training quality assurance in deep learning.
\end{abstract}


\section{Introduction}
\label{sec1}
Deep learning models have become crucial for tackling complex computational problems, owing to the rich representations they develop through their multi-layered structures and non-linear transformations \cite{lecun2015deep,goodfellow2016deep}.  Despite their remarkable effectiveness, these models are often perceived as black boxes, raising concerns related to their robustness, reliability, and safety. As neural networks become increasingly integral to critical applications, ensuring that they are properly trained and perform as intended is paramount.

To evaluate the training performance and a network's ability to store a model after training (i.e., achieve zero loss), one approach is to statistically analyze neural networks under certain assumptions. This has been done for networks with thresholding activation functions like ReLU, where researchers have determined the number of parameters needed to achieve full memory capacity \cite{vershynin2020memory}. It is well-known that for ReLU-based neural networks (NNs), once a sufficient number of weights is
reached, the network can achieve full memory capacity or even zero loss in some cases. In \cite{oymak2019overparameterized}, the authors theoretically demonstrate that in the over-parameterization
regime, the stochastic gradient descent (SGD) algorithm can converge to the global minimum.  However, these methods are statistical in nature and rely on specific assumptions about the input data and the model, which may limit their applicability.

The most widely encountered scenario in mathematical optimization, including for training deep neural networks, is that the true optimum value of the objective function is not known 
\emph{a priori}---i.e., it is unclear to what extent the minimization objective can be reduced. Therefore, finding a way to evaluate the effectiveness of optimizers in minimizing the training objectives is an intriguing pursuit. 
In this paper, we introduce a novel framework for \emph{data-aware certification and monitoring} of deep neural network training, which we refer to as the \emph{YES training bounds}. These bounds aim to provide a qualified answer to the question: \emph{Is the neural network being properly trained by the data and the optimizer (YES or NO)?}  The certificates are \emph{data-aware} in the sense that they take into account the specific structure and properties of the training data, allowing for more precise and tailored bounds on the training performance. This means that rather than relying on generic or overly conservative estimates, the guarantees reflect the actual dataset being used, providing more relevant insights into the training process. They are also \emph{real-time}, fulfilling a long-standing desire in the deep learning community for tools that can evaluate and certify the progress made in training a neural network as it unfolds. By offering concrete, data-specific evaluations during training, these guarantees allow practitioners to confidently assess when sufficient learning has taken place and whether the model is converging toward optimal training, all without the need for extensive post-hoc analysis or probabilistic approximations.

To facilitate the practical application of the YES bounds, we offer an intuitive \emph{cloud-based monitoring system} that visually represents the training progress. This system uses a color-coded scheme—\textbf{\textcolor{red}{red}} (ineffective training), \textbf{\textcolor{golden_color}{yellow}} (caution, training non-optimal), and \textbf{\textcolor{spring_green}{green}} (effective training in progress)—to certify the training status in real time; see Fig.~\ref{figure_cloud}. 
As the field of artificial intelligence (AI) continues to evolve, there is a growing emphasis on \emph{AI safety} and \emph{regulation} to ensure that AI systems meet the highest standards of reliability and effectiveness \cite{howarsafe,tran2023tutorial,aird2023safeguarding,bengio2024managing,balagurunathan2021requirements,brundage2020toward,hendrycks2022x,marques2022delivering,qi2024ai,pinto2015safe,zhong2023towards,wing2021trustworthy,li2023trustworthy}. The proposed YES training bounds and the associated training cloud system present a promising pathway toward establishing a benchmark for the AI industry, regulators, and users alike. We envision a future where these bounds are widely adopted, serving as a standardized measure to evaluate and ensure the optimal training performance of AI systems across diverse applications. This standardization could play a crucial role in fostering trust and accountability within the AI ecosystem. 

The YES bounds and their associated cloud system primarily serve as a \emph{sanity check for the optimizer}—the mechanism driving the training process. This system continuously challenges the optimizer with intelligently crafted examples grounded in mathematically rigorous heuristics and tailored to the layer-wise architecture of neural networks. By doing so, the bounds lay the proper groundwork for standardization of training practices in deep learning. Two key observations support this assertion:
\begin{enumerate}
    \item \emph{The Bounds Go Beyond Local Optimization Perspectives:} Traditional optimization approaches in neural networks predominantly focus on local information—examining the immediate vicinity of the optimization landscape, such as attraction domains and local optima. However, numerical examples demonstrate that training losses can often plateau in regions where the YES bounds and the associated cloud unequivocally indicate suboptimality. This phenomenon underscores the bounds' ability to transcend local optimization insights, providing a more global perspective on training efficacy.

    \item \emph{ Deterministic Certification Without Randomization:} Unlike methods that rely on randomization around the current training state to certify local or neighborhood optimality, the YES bounds operate deterministically. They do not produce varying certification results across different training realizations, even when initialized identically or following similar optimization paths. This determinism is particularly advantageous for standardization, as it ensures consistent and reproducible determinations of training quality. By eliminating the variability introduced by randomization, the YES bounds present a robust and reliable candidate for establishing standardized training benchmarks in deep learning.
\end{enumerate}
The rest of the paper is organized as follows: In Section~\ref{sec2}, we present the preliminaries of our work and outline the architecture of the model to which we apply the YES bounds. Section~\ref{sec3} draws attention to a fundamental bound for single-layer NNs that will later aid in the development of the YES bounds. In Section~\ref{sec4}, we extend this idea to deep neural networks, accounting for multiple layers and the nonlinearity of activation functions. Section~\ref{sec5} details our numerical analysis, evaluating the training performance on both synthetic and real data across various applications, such as phase retrieval and image denoising, demonstrating the capability of the proposed bound in assessing AI reliability.

\emph{Notation:} Throughout this paper, we use bold lowercase and bold uppercase letters for vectors and matrices, respectively.  We represent a vector $\mathbf{x}$ and a matrix $\mbB$ in terms of their elements as $\mathbf{x}=[x_{i}]$ and $\mathbf{B}=[B_{i,j}]$, respectively. $(\cdot)^{\top}$ is the vector/matrix transpose. 
The Frobenius norm of a matrix $\mathbf{B}\in \mathbb{C}^{M\times N}$ is defined as $\|\mathbf{B}\|_{\mathrm{F}}=\sqrt{\sum^{M}_{r=1}\sum^{N}_{s=1}\left|b_{rs}\right|^{2}}$, where $b_{rs}$ is the $(r,s)$-th entry of $\mathbf{B}$. Given a matrix $\mbB$, we define the operator $[\mbB]_{+}$ as $\max\left\{B_{i,j},0\right\}$. $\mbB^{\dagger}$ is the Moore-Penrose pseudoinverse of $\mbB$.

\section{Preliminaries}
\label{sec2}
Let $\mbA_k$ and $\mbb_k$ denote the weight matrix and bias vector for layer $k\in[K]$ of the network, respectively. Suppose the input and output data are represented by matrices $\mbX\in\mathbb{R}^{n\times d}$ and $\mbY\in\mathbb{R}^{m\times d}$, respectively, such that
\begin{equation}
\label{1}
\mbX =\left[\begin{array}{c|c|c}
\mbx_1  &\cdots & \mbx_d 
\end{array}\right], \quad \mbY =\left[\begin{array}{c|c|c}
\mby_1  &\cdots & \mby_d
\end{array}\right],
\end{equation}
where $\{\mbx_i\in\mathbb{R}^n\}_{i=1}^{d}$ and $\{\mby_i\in\mathbb{R}^m\}_{i=1}^{d}$ denote $d$ observations of $n$-dimensional and $m$-dimensional input and output features, respectively. Define the matrices $\{\mbB_k\}_{k=1}^{K}$ as
\begin{equation}
\label{2}
\mbB_{k}=\left[\begin{array}{c|c|c}
\mbb_k  &\cdots & \mbb_k 
\end{array}\right]\in\mathbb{R}^{m\times d},\quad k\in[K].
\end{equation}
We consider two closely-related training losses for such a DNN employing a nonlinear activation function $\Omega(.)$, namely
\begin{equation}
\label{3}
\mathcal{L}_0 \left(\{\mbA_k\}_{k=1}^{K},\mbX, \mbY \right) \triangleq \| \Omega (\mbA_K \Omega (\mbA_{K-1}\Omega (\cdots\Omega(\mbA_1\mbX+\mbB_1)\cdots+\mbB_{K-1})+\mbB_K)-  \mbY \|_\mathrm{F}^2,
\end{equation}
and
\begin{equation}
\label{4}
\mathcal{L}_K\left(\{\mbA_k\}_{k=1}^K,\{\mbY_k\}_{k=2}^{K} \right)\triangleq\sum_{k=1}^{K}\|\Omega(\mbA_k\mbY_k+\mbB_k)- \mbY_{k+1} \|_\mathrm{F}^2,
\end{equation}
which is to be minimized with respect to  $\{\mbA_k\}$ and $\{\mbY_k\}$ by setting $(\mbY_1,\mbY_{K+1})=(\mbX,\mbY)$. For our purpose, we will use the following notations:
\begin{equation}
\begin{aligned}
\label{5}
\Tilde{\mbA}_k &= \left[\begin{array}{c|c}
\mbA_k  & \mbb_k 
\end{array}\right],\\
\Tilde{\mbY}_k &= \left[\begin{array}{c|c}
\mbY^{\top}_k  & \mathbf{1} 
\end{array}\right]^{\top}.
\end{aligned}
\end{equation}
Using these notations, the training loss expression in \eqref{4} can be reformulated as follows:
\begin{equation}
\label{6}
\mathcal{L}_K\left(\{\Tilde{\mbA}_k\}_{k=1}^K,\{\Tilde{\mbY}_k\}_{k=2}^{K} \right)\triangleq\sum_{k=1}^{K}\|\Omega(\Tilde{\mbA}_k\Tilde{\mbY}_k)- \mbY_{k+1} \|_\mathrm{F}^2.
\end{equation}
Note that any nonlinearity in DNNs can be dealt with by imposing the output structure of the activation function on $\{\mbY_k\}$. For instance, for a ReLU activation function, we only need $\mbY_k \geq \mathbf{0}$, $k\in \{2,\cdots,K\}$. With this in mind, one may consider the alternative constrained quadratic program:
\begin{eqnarray}
\label{7}
&~~\quad\underset{\{\Tilde{\mbA}_k\}_{k=1}^K,~\{\Tilde{\mbY}_k\}_{k=2}^{K} }{\textrm{minimize}} ~\displaystyle \sum_{k=1}^{K} \|\Tilde{\mbA}_k \Tilde{\mbY}_k-\mbY_{k+1}\|_\mathrm{F}^2 \\ \nonumber
&\mbox{subject to} \qquad  \qquad \Tilde{\mbY}_k \in  H_{\Omega},	\end{eqnarray}
where $H_{\Omega}$ denotes the matrix space created by applying $\Omega$. This representation, which disregards the activation function and imposes a space constraint on the solution, is useful in establishing the foundation for our bounds in subsequent sections. For the rest of the paper, we present our formulations without considering the effects of bias terms. However, these formulations can be easily extended to include bias terms by substituting $\{\mbA_k\}$ and $\{\mbY_k\}$ with $\{\Tilde{\mbA}_k\}$ and $\{\Tilde{\mbY}_k\}$, respectively.

\section{An Optimality Bound for Single-Layer Neural Networks}
\label{sec3}
We begin by considering a one-layer neural network where the goal is to approximate the function $f: \mathbb{R}^{n} \to \mathbb{R}^{m}_{+}$. Let $\mathbf{A}$ be the weight matrix that we aim to optimize, such that the objective $\|\mathbf{Y}-\Omega(\mathbf{AX})\|_{\mathrm{F}}^2$ is minimized. Assume $\mathbf{Y}$ is in the feasible set, i.e. the range of a non-linear activation function $\Omega(.)$ of the layer. The weight matrix $\mathbf{A}$ that minimizes the alternative objective $\|\mathbf{Y}-\mathbf{AX}\|_{\mathrm{F}}^2$ can then be expressed as:
\begin{equation}
\label{8}
\mathbf{A} = \mathbf{Y} \mathbf{X}^{\dagger}.
\end{equation}
The minimal achievable loss of training for a one-layer neural network is thus bounded as
\begin{equation}
\label{9}
\|\mathbf{Y}-\Omega(\mathbf{A}^{\text{OPT}} \mbX)\|_{\mathrm{F}}^2\leq\|\mathbf{Y} - \Omega(\mathbf{YX}^{\dagger}\mathbf{X})\|_{\mathrm{F}}^2,
\end{equation}
where $\mathbf{A}^{\text{OPT}}$ is the optimal weight matrix that minimizes the objective $\|\mathbf{Y}-\Omega(\mathbf{AX})\|_{\mathrm{F}}^2$. For instance, if $\Omega(.)$ is the ReLU function, then we simply have
\begin{equation}
\label{10}
\|\mathbf{Y}-[\mathbf{A}^{\text{OPT}} \mbX]_+\|_{\mathrm{F}}^2\leq\|\mathbf{Y}-[\mathbf{YX}^{\dagger}\mathbf{X}]_+\|_{\mathrm{F}}^2.
\end{equation}
Since the solution in \eqref{8} is feasible (not necessarily optimal, although meaningful) for the optimizer of $\|\mathbf{Y}-\Omega(\mathbf{AX})\|_{\mathrm{F}}^2$, a well-designed training stage is generally expected to satisfy the bound in \eqref{9}.

\section{The Training Performance Bounds for Multi-Layer Networks}
\label{sec4}
So, what is the case for \emph{depth} and \emph{nonlinearity}? In scenarios where a single-layer is insufficient, neural learning models employ multiple layers with nonlinear activation functions such as ReLU to progressively refine input mappings (leading to \emph{deep} learning). The necessity for multiple layers can be attributed to the complexity of the mapping task in question. The necessity for nonlinear transformations in multi-layer mappings, however, comes from
the fact that multiple consecutive linear mappings are in effect equivalent to a single-layer mapping (with a weight matrix equal to the product of linear mapping matrices).

Inspired by our observation in the single-layer case, in the following we introduce the YES training bounds for multi-layer NNs. These bounds aim to provide a qualified answer to the question as to whether a neural network is being properly trained by the data: YES or NO?
\begin{figure}[t]
\centering
\begin{tikzpicture}[>=latex, thick]

    \node[circle, fill, inner sep=2pt, label=left:$\mathbf{X}$] (X) at (0,0) {};
    \node[circle, fill, inner sep=2pt, label=above:$\mathbf{Y}_2$] (Y1) at (2,0.5) {};
    \node[circle, fill, inner sep=2pt, label=above:$\mathbf{Y}_3$] (Y2) at (4,1) {};
     \node (Ydots) at (6,1.5) {$\cdots$};
    \node[circle, fill, inner sep=2pt, label=above:$\mathbf{Y}_K$] (Yk) at (8,1) {};
    \node[circle, fill, inner sep=2pt, label=right:$\mathbf{Y}$] (Y) at (10,0.5) {};

    \draw[->, blue, thick] (X) to[bend left] (Y1);
    \draw[->, red, thick] (Y1) to[bend left] (Y2);
    \draw[->, green, thick] (Y2) to[bend left] (Ydots);
    \draw[->, orange, thick] (Ydots) to[bend left] (Yk);
    \draw[->, purple, thick] (Yk) to[bend left] (Y);
    
    \draw[->, dashed, thick] (X) to[bend right=20] (Y);
    
\end{tikzpicture}
\caption{Schematic of the iterative mapping from $\mathbf{X}$ to $\mathbf{Y}$ through intermediate steps $\mathbf{Y}_2, \mathbf{Y}_3, \ldots, \mathbf{Y}_K$.}
\label{fig:mapping}
\end{figure}
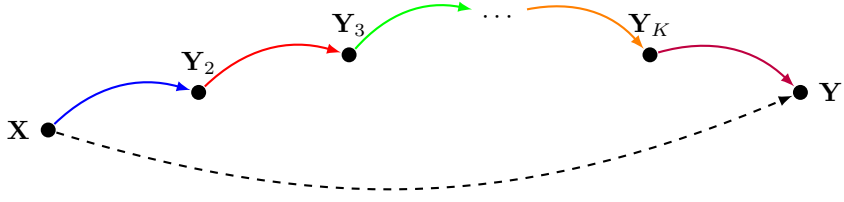
\subsection{The YES-0 Bound}
\label{sec4_1}
Assuming an initial value $\mathbf{Y}_1=\mathbf{X}$, the network aims to transform $\mathbf{Y}_1$ through intermediate states $\mathbf{Y}_2,\mathbf{Y}_3,\ldots,\mathbf{Y}_K$, finally achieving $\mathbf{Y}_{K+1}=\mathbf{Y}$; as illustrated in Fig.~\ref{fig:mapping}. 
A sensible but sub-optimal approach will be to assume at each layer that we aim to project directly to $\mbY$, instead of other useful intermediate points $\{\mathbf{Y}_k\}$. Considering our one-layer bound, this no-intermediate approach will be equivalent to setting
\begin{equation}
\label{11}
\mathbf{A}_{k}=\mathbf{Y}\mathbf{Y}_k^{\dagger}, \quad k\in[K].
\end{equation}
In particular, when there are no intermediate mappings we are dealing with an order-0 (referred to as YES-0) bound:
\begin{equation}
\label{12}
\mathcal{L}_0\left(\{\mathbf{A}^{\text{OPT}}_k\}_{k=1}^K,\mbX,\mbY\right)\leq \text{B}_{\text{YES-0}}\triangleq\|\mathbf{Y} - \mathbf{Y}_{K+1}\|_{\mathrm{F}}^2,
\end{equation}
where
\begin{equation}
\label{13}
\mathbf{Y}_{k+1}=\Omega\left(\mathbf{Y} \mathbf{Y}_k^{\dagger}\mathbf{Y}_k\right),\quad k\in[K].
\end{equation}
For instance, we have
\begin{equation}
\label{14}
\mathbf{Y}_{k+1}=\left[\mathbf{Y}\mathbf{Y}_k^{\dagger} \mathbf{Y}_k \right]_{+}, \quad k\in[K],
\end{equation}
for the ReLU activation function.

Since the central idea behind the creation of the YES-0 bound, in essence, stems from sequential projections, one may readily expect a decreasing behavior from the bound as the number of layers grows large. The following theorem provides concrete conditions under which such a monotonic behavior will be observed.

\begin{theorem}
\label{theorem1}
Let $\Omega$ be an activation function in a deep neural network. If $\Omega$ is applied in an element-wise manner and satisfies the following conditions:
\begin{itemize}
\item \emph{1-Lipschitz Condition}:
\begin{equation}
\label{15}
\|\Omega(\mathbf{x}_1)-\Omega(\mathbf{x}_2)\|\leq\| \mathbf{x}_1-\mathbf{x}_2\|,\quad\forall\,\mathbf{x}_1, \mathbf{x}_2\in\mathbb{R}^n,
\end{equation}
\item \emph{Projection Property}:
\begin{equation}
\label{16}
\Omega(\mathbf{Y})=\mathbf{Y},\quad\mathrm{if}\quad\mathbf{Y}\in H_{\Omega},
\end{equation}
\end{itemize}
then the YES-0 bound is monotonically decreasing with respect to the depth of the network. That is, for each layer $k$:
\begin{equation}
\label{17}
\|\mbY-\mbY_{k+1}\|_\mathrm{F}^2\leq\|\mbY-\mbY_{k}\|_\mathrm{F}^2,
\end{equation}
where $\mbY_k$ is generated following \eqref{13}.
\end{theorem}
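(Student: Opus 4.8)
The plan is to chain three inequalities that collapse the layer-to-layer comparison into a single orthogonal-projection estimate. The starting point is to exploit the \emph{Projection Property} \eqref{16} at the target: since $\mbY$ is assumed feasible, i.e. $\mbY\in H_\Omega$, we have $\Omega(\mbY)=\mbY$, so that by the recursion \eqref{13} defining $\mbY_{k+1}$ one can write $\|\mbY-\mbY_{k+1}\|_\mathrm{F}=\|\Omega(\mbY)-\Omega(\mbY\mbY_k^\dagger\mbY_k)\|_\mathrm{F}$. This rewriting is what lets the Lipschitz hypothesis bite on both arguments.

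Next I would invoke the \emph{1-Lipschitz Condition} \eqref{15}. Because $\Omega$ acts entrywise, the scalar/vector Lipschitz bound lifts verbatim to the Frobenius norm: vectorize the two matrices and note that $\|\cdot\|_\mathrm{F}$ is exactly the Euclidean norm of the vectorization, so $\|\Omega(\mbY)-\Omega(\mbY\mbY_k^\dagger\mbY_k)\|_\mathrm{F}\le\|\mbY-\mbY\mbY_k^\dagger\mbY_k\|_\mathrm{F}$. At this point the nonlinearity has been discharged and the remaining task is purely linear-algebraic.

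The crux is the final inequality $\|\mbY-\mbY\mbY_k^\dagger\mbY_k\|_\mathrm{F}\le\|\mbY-\mbY_k\|_\mathrm{F}$. The key observation is that $P_k\triangleq\mbY_k^\dagger\mbY_k$ is the orthogonal projector (symmetric and idempotent) onto the row space of $\mbY_k$, so $\mbY P_k$ is the best Frobenius approximation of $\mbY$ among all matrices whose rows lie in $\mathrm{row}(\mbY_k)$; since $\mbY_k$ itself obeys $\mbY_k P_k=\mbY_k$, it is a competitor in that minimization, which already yields the bound. To make this airtight I would prefer a Pythagorean decomposition: write $\mbY-\mbY_k=\mbY(\mbI-P_k)+(\mbY-\mbY_k)P_k$ and verify that the cross term vanishes, using $P_k^\top=P_k$, $P_k^2=P_k$, hence $P_k(\mbI-P_k)=\mathbf{0}$, together with the cyclic property of the trace in the Frobenius inner product. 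This gives $\|\mbY-\mbY_k\|_\mathrm{F}^2=\|\mbY(\mbI-P_k)\|_\mathrm{F}^2+\|(\mbY-\mbY_k)P_k\|_\mathrm{F}^2\ge\|\mbY-\mbY P_k\|_\mathrm{F}^2$, which is precisely what is needed. Squaring the composed chain then delivers \eqref{17}.

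I expect this projection step to be the main (indeed essentially the only) obstacle, and it is more a matter of recognizing the right structure than of hard estimation: one must identify $\mbY_k^\dagger\mbY_k$ as an orthogonal projector and check orthogonality of the residual $\mbY(\mbI-P_k)$ to the correction $(\mbY-\mbY_k)P_k$. A secondary point worth flagging is dimensional compatibility: the quantity $\|\mbY-\mbY_k\|_\mathrm{F}$ is only meaningful when $\mbY_k$ and $\mbY$ share the shape $m\times d$, which holds for every $k\ge2$ since each update \eqref{13} lands in $\mathbb{R}^{m\times d}$; the monotonicity is therefore understood over the range of indices where the shapes match, which the recursion guarantees after its first step.
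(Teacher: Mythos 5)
Your proof is correct and shares the paper's skeleton: use the projection property to rewrite $\mbY=\Omega(\mbY)$, apply the element-wise 1-Lipschitz bound to strip the nonlinearity, and reduce everything to the linear inequality $\|\mbY-\mbY\mbY_k^{\dagger}\mbY_k\|_{\mathrm{F}}\leq\|\mbY-\mbY_k\|_{\mathrm{F}}$. Where you differ is in how that last inequality is discharged. The paper argues variationally: $\mbA_k=\mbY\mbY_k^{\dagger}$ is a minimizer of the least-squares criterion $\|\mbY-\mbA\mbY_k\|_{\mathrm{F}}^2$ over all $\mbA$, and the identity matrix is a feasible competitor whose objective value is exactly $\|\mbY-\mbY_k\|_{\mathrm{F}}^2$, so the bound follows in one line at the price of invoking pseudoinverse least-squares optimality as a known fact. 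You instead prove the same inequality from scratch: identify $P_k=\mbY_k^{\dagger}\mbY_k$ as the orthogonal projector onto the row space of $\mbY_k$, decompose $\mbY-\mbY_k=\mbY(\mbI-P_k)+(\mbY-\mbY_k)P_k$ (valid because $\mbY_kP_k=\mbY_k$), and kill the cross term using $P_k^{\top}=P_k$ and $P_k^{2}=P_k$. The two arguments are mathematically equivalent---your Pythagorean identity is exactly the reason the paper's variational claim holds---so the difference is one of exposition rather than substance: yours is self-contained and makes the underlying geometry explicit, while the paper's is shorter and matches the \emph{feasible-competitor} heuristic that motivates the YES bounds throughout. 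Your closing remark on shape compatibility is also a legitimate point the paper passes over silently: the $k=1$ instance of \eqref{17} compares against $\mbY_1=\mbX\in\mathbb{R}^{n\times d}$ and is only meaningful when $m=n$, whereas from $k\geq 2$ onward all iterates lie in $\mathbb{R}^{m\times d}$ and the inequality is unconditional.
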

\begin{proof}  
Following our formulations, the error at layer $(k-1)$ is
\begin{equation}
\label{18}
\mathbf{E}_{k-1}=\mathbf{Y}-\mathbf{Y}_{k},
\end{equation}
where $\mathbf{Y}$ is the target output, and $\mathbf{Y}_{k}$ is the network output after $(k-1)$ layers. At each layer, the network updates its output via:
\begin{equation}
\label{19}
\mathbf{Y}_{k+1}=\Omega(\mathbf{A}_k \mathbf{Y}_{k}),
\end{equation}
with $\mathbf{A}_k$ representing the weight matrix associated with the YES-0 bound at layer $k$. The error at layer $k$ is thus:
\begin{equation}
\label{20}
\mathbf{E}_{k} =\mathbf{Y}-\Omega(\mathbf{A}_k \mathbf{Y}_{k}).
\end{equation}
By considering the 1-Lipschitz and projection properties of the activation function $\Omega$, we have:
\begin{equation}
\label{21}
\begin{aligned}
\|\mathbf{E}_{k}\|_\mathrm{F}^2&=\|\mathbf{Y}-\Omega(\mathbf{A}_k\mathbf{Y}_{k})\|_\mathrm{F}^2\\&=\|\Omega(\mathbf{Y})-\Omega(\mathbf{A}_k\mathbf{Y}_{k})\|_\mathrm{F}^2\\&\leq\|\mathbf{Y}-\mathbf{A}_k\mathbf{Y}_{k}\|_\mathrm{F}^2.
\end{aligned}
\end{equation}
Since $\mathbf{A}_k$ is the minimizer of the quadratic criterion $\|\mathbf{Y}-\mathbf{A}_k\mathbf{Y}_{k}\|_\mathrm{F}^2$, we have:
\begin{equation}
\label{22}
\|\mathbf{Y}-\mathbf{A}_k\mathbf{Y}_{k}\|_\mathrm{F}^2\leq\| \mathbf{Y}-\mathbf{Y}_{k}\|_\mathrm{F}^2=\|\mathbf{E}_{k-1}\|_\mathrm{F}^2.
\end{equation}
Combining \eqref{22} with \eqref{21} completes the proof.
\end{proof}

Obtaining the YES-0 bound, which is evaluated for each layer and projects the input of each layer to the final output, can be viewed as a layer-wise optimization with a linear closed-form operator. The YES-0 bound can serve as an immediate benchmark for the assessment of training in deep learning: One can verify whether the training has a YES-0 bound \emph{certificate}, meaning that they are achieving a training loss lower than YES-0. Otherwise, they can attest that the training is not \emph{proper}.



\subsection{Beyond the YES-0 Bound}
Note that the YES-0 bound is an easily calculated bound that may be used to immediately detect if proper (not necessarily optimal) training has been carried out, in the
sense that the network weights have been \emph{meaningfully impacted by the training data}. The answer (YES or NO) will provide immediate relief as to whether training has been meaningful at all. More sophisticated bounds, such as the YES bounds of higher degrees may be used to further assess the quality of training, as discussed in the following. 

We begin the construction of the promised bounds by examining whether one can enhance the bounds by considering a more conducive route to the output $\mathbf{Y}$.

\subsubsection{Is direct path the best path?}
\label{sec4_2}


\begin{wrapfigure} {r}{0.480\textwidth}
\vspace{-5.5mm}
\centering
{\includegraphics[width=0.49\columnwidth]{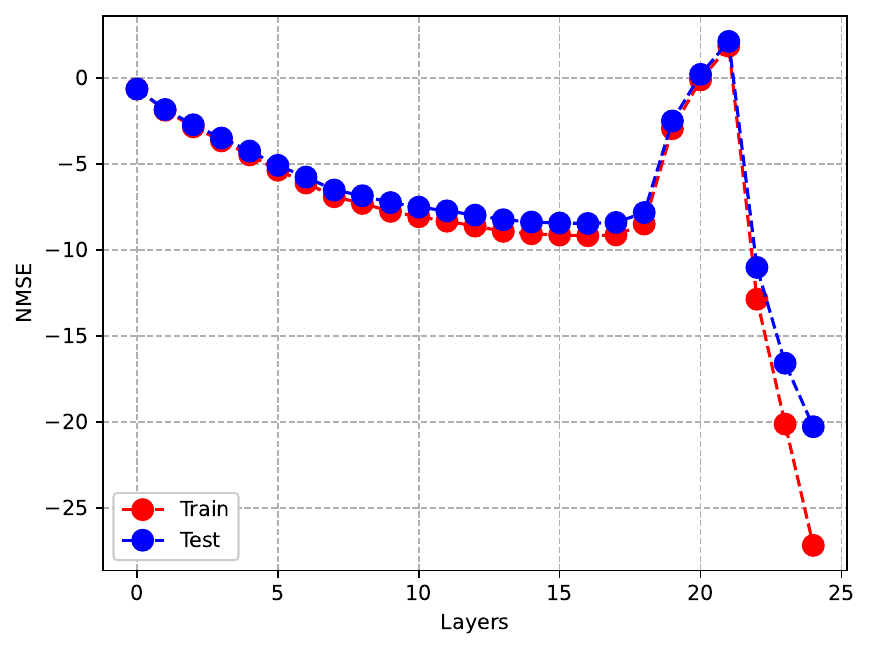}}
\vspace{-8mm}
\caption{The illustration of a non-monotonic per-layer error (in dB) observed across both training and test stages for a DUN.}
\label{figure_0}
\vspace{-7mm}
\end{wrapfigure}

Consider a local optimizer tasked with navigating a highly non-convex loss landscape riddled with numerous local minima. Optimizers like SGD may become trapped in suboptimal regions. Interestingly, allowing the optimizer to momentarily increase the loss can enable it to escape these attraction domains and ultimately reach a better minimum. This fact highlights that sometimes, a non-direct path—including temporary increases in the loss function—can be more effective in minimizing the ultimate objective.

This behavior has been widely observed in Deep Unfolding Networks (DUNs) \cite{chen2022learning}, which are typically formulated based on first-order optimization methods exhibiting monotonic convergence properties. When trained with sufficient data, DUNs learn to behave non-monotonically, allowing temporary increases in loss to escape attraction domains associated with poor local optima. As a result, they can achieve lower overall objectives compared to their original first-order counterparts. Figure~\ref{figure_0} illustrates this behavior, showing that the loss at each layer's output does not decrease monotonically.

We will leverage this observation and explore projecting the input from each layer onto a meaningfully selected sequence of intermediate points in lieu of immediately projecting them onto $\mathbf{Y}$. Inspired by the indirect optimization path, this adjustment introduces the notion of the YES bounds of higher degrees, which will be discussed in detail in the following.

\subsubsection{Exploring Structured Transition through Nonlinearity-Compliant Spaces}
\label{sec4_3}
Enhanced bounds can be established by defining a sequence of useful intermediate points $\{\mathbf{Y}_k\}$ that conform to the nonlinear activation function of the network, i.e. $\mbY_k\in H_{\Omega}$. Let us assume we have such a useful sequence as the outcomes of layers $t_2,\cdots,t_\Sigma$ (where $t_\Sigma=K+1$) 
as $\mathbf{Y}_{t_2}^{\star},\cdots,\mathbf{Y}_{t_\Sigma}^{\star}$ (where $\mathbf{Y}_{t_\Sigma}^{\star}=\mathbf{Y}$). This gives rise to the YES-$\Sigma$ bound, i.e.,
\begin{equation}
\label{23}
\mathcal{L}_0\left(\{\mbA^{\text{OPT}}_k\}_{k=1}^K,\mbX, \mbY \right) \leq \text{B}_{\text{YES-}\Sigma} \triangleq \|\mathbf{Y}-\mathbf{Y}_{K+1}\|_{\mathrm{F}}^2,
\end{equation}
where
\begin{equation}
\label{24}
\mathbf{Y}_{k+1} = \Omega \left( \mathbf{Y}_{t_\sigma}^{\star}\mathbf{Y}_k^{\dagger} \mathbf{Y}_k \right), \quad t_{\sigma-1}\leq k \leq t_\sigma, \quad 2 \leq \sigma \leq \Sigma,
\end{equation}
with $\mathbf{Y}_1=\mathbf{X}$. Given a judicious selection of $\mathbf{Y}_{t_2}^{\star},\cdots,\mathbf{Y}_{t_\Sigma}^{\star}$, the latter should provide a tighter error bound compared to the YES-0 bounding approach. 

In addition to the local optimization perspective discussed in Sec.~\ref{sec4_2}, a domain-aware viewpoint is also helpful. While it is fair to say that we hope that by iterative mapping, we get closer and closer to the output of interest, it has also been observed in various machine learning problems that after extensive training (resembling what we can describe as optimal training), the output of some inner layers become something meaningful to domain experts (see, e.g.,  the literature on object detection in image processing, where the primary task of certain  layers is well-understood, such as detecting edges, and certain features). This observation closely associates with the vision put forth above on tightening the YES-0 bound by considering useful and meaningful intermediate mapping points.

The key question in deriving the enhanced YES bounds is thus the judicious designation of intermediate mapping points.
One may suggest these two ways:
\begin{enumerate}
\item \emph{Problem-specific \underline{construction}-based sets of intermediate mapping points:} As for the image processing example above, in specific applications, we might have prior knowledge of the structure of the intermediate projection points. This also presents very interesting points of tangency with the literature on model-based deep learning, and in particular, DUNs. In this context, the network layers and the input-ouput relations are strongly connected to a well-established iterative procedure, which can help with identifying suitable intermediate mappings discussed herein.
\item \emph{\underline{Training data}-driven generation of the mapping points:} This approach takes advantage of the training data to enhance the YES bounds along with the training loss. This is going to be the focus of our proposed YES bounds of ``higher degree" in the following. 
\end{enumerate}

\subsubsection{The YES-$k$ Training Bounds ($k \geq 1$)}
\label{sec4_3_3}

Leveraging the training data, we utilize the output from a subset of layers acquired during the training process as the intermediate mapping points and incorporate these points into the YES bounds by projecting the input of the layer onto these intermediate outputs rather than relying solely on $\mathbf{Y}$. Specifically, one can utilize $k\in[K-1]$ intermediate mapping points in a $K$ layer network to create associated YES bounds. To better illustrate this idea, we present the example of the higher-degree YES bounds framework for the case of $k=1$:
\begin{enumerate}
    \item We set $\mbY_1=\mbX$. 
    \item We choose $\mbY^{\star}_2$ as the output of the first layer during the training stage.
    \item Following Section~\ref{sec3}, we aim to optimize the weight matrix of the first layer $\mbA_1$, such that the alternative objective $\|\mbY^{\star}_2-\mbA_1\mbY_1\|_{\mathrm{F}}^2$ is minimized. This is achieved by
    \begin{equation}
    \label{25}
    \mathbf{A}_1=\mbY^{\star}_2\mbY^{\dagger}_1.
    \end{equation}
    \item We then obtain the output of the first layer as $\mbY_2=\Omega\left(\mbY^{\star}_2\mbY^{\dagger}_1\mbY_1\right)$.
    \item Since we only chose $k=1$ intermediate point, we then progressively project the input of each layer to the output $\mbY$ as
    \begin{equation}
    \label{26}
    \mathbf{Y}_{k+1}=\Omega\left(\mathbf{Y} \mathbf{Y}_k^{\dagger}\mathbf{Y}_k\right),\quad k\in\{2,\cdots,K\}.
    \end{equation}
    \item We compute the resulting error of this process as $\|\mbY-\mbY_{K+1}\|_{\mathrm{F}}^2$.
    \item We repeat steps~(\textcolor{blue}{3-6}) by choosing other intermediate points $\mbY^{\star}_3,\cdots,\mbY^{\star}_K$ in step~\textcolor{blue}{2}.
    \item Take the minimum of all errors obtained in Step~\textcolor{blue}{6}, leading to the creation of the YES-1 training bound.
\end{enumerate}

The name YES-1 bound takes into account the fact that we have only considered $k=1$ intermediate point in the above process. Note that the above formulation can be easily extended to $k \in \{2,\cdots,K-1\}$ intermediate points, generating higher degree YES bounds, namely YES-$k$ bounds for $k \in \{2,\cdots,K-1\}$. 
In contrast to the YES-0 training bound, the YES bounds of higher degree are 
\emph{real-time}, i.e., they evolve alongside the training loss.
 It is important to note that YES-$k$ bounds for larger $k$ are not necessarily smaller than those for smaller $k$, particularly at the initial epochs where the training may not suggest excellent intermediate points. Higher degree bounds are, however, highly likely to perform better when the training is in good condition. We will numerically validate this phenomenon in the numerical results section. In Algorithm~\ref{algorithm_1}, we reformulate the YES-$k$ bounds for $k \geq 1$, incorporating a monotonic modification through the inclusion of YES-$k$ subsets to ensure the bounds remain monotonic.
\begin{figure}[t]
	\centering
		{\includegraphics[width=0.45\columnwidth]{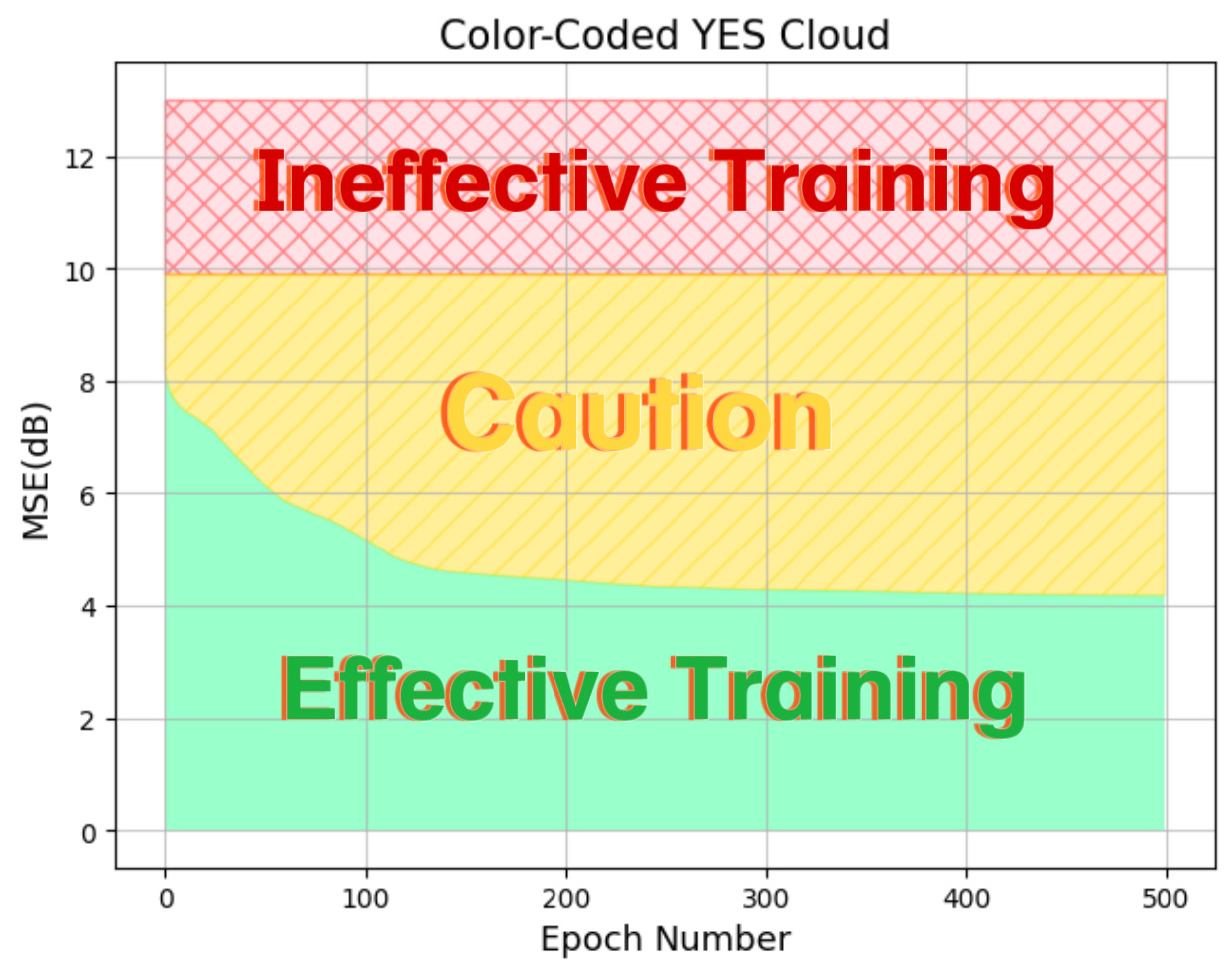}}
	\caption{ YES training cloud-system for quality monitoring: A training loss that remains above the YES training cloud (\textbf{\textcolor{red}{red}} area) indicates ineffective training. When the loss penetrates the cloud (\textbf{\textcolor{golden_color}{yellow}} area), it suggests that meaningful training has occurred or is in progress—network weights have been significantly influenced by the data. However, caution is advised, as the training is certainly not optimal. Dropping below the cloud (\textbf{\textcolor{spring_green}{green}} area) signals effective training in in progress and suggests potential for optimality. It may also indicate diminishing returns in the training process, where further gains could be incremental.}
\label{figure_cloud}
\end{figure}
\begin{algorithm}[t]
	\caption{Generation of the YES Training Bounds ($k 
 \geq 1$).}
    \label{algorithm_1}
    \begin{algorithmic}[1]
    \Statex \textbf{Input:} $(\mbX\in\mathbb{R}^{n\times d},\mbY\in\mathbb{R}^{m\times d})$ are training data matrices with $d$ denoting the number of training samples.
    \Statex \textbf{Output:} $\text{YES~training~bounds}$.
    \State $\mathcal{I}\gets\{2,\cdots,K\}$.
    \State $\mbe\gets\mathbf{0}_{K-1}$.
    \For{$k=1:(K-1)$}
    \State $\mathcal{H}\gets\operatorname{combination}(\mathcal{I},k)$ $\triangleright$ $\operatorname{combination}(\mathcal{I},k)$ is the combination operator that selects $k$ items from the set $\mathcal{I}$.
    \State $\mbu\gets\mathbf{0}_{|\mathcal{H}|}$ $\triangleright$ $\mathbf{0}_{|\mathcal{H}|}$ denotes a zero vector with the length $|\mathcal{H}|$.
    \For{$i=0:|\mathcal{H}|-1$}
    \State $\mathcal{H}_i\gets\mathcal{H}[i]$ $\triangleright$ $\mathcal{H}[i]$ denotes the $i$-th combination item of $\mathcal{H}$.
    \State $l\gets 0$.
    \State $\mbY^{\star}\gets[~]$ $\triangleright$ $[~]$ denotes an empty tensor.
    \For{$j=0:(k-1)$}
    \State $\mbY^{\star}\operatorname{.append}(\operatorname{model}_{\mathcal{H}_i}(\mbX))$ $\triangleright$ $\mbY^{\star}\operatorname{.append}(\mbT)$ denotes appending the matrix $\mbT$ in an empty tensor $\mbY^{\star}$, $\operatorname{model}_{\mathcal{H}_i}(\mbX)$ denotes the output of the training model at specific layers specified by the elements in $\mathcal{H}_i$.
    \EndFor
    \State $\mbY_t\gets\mbX$.
    \For{$j=0:(k-1)$}
    \While{$l\leq\mathcal{H}_i[j]$}
    \State $\mbA_t\gets\mbY^{\star}[j]\mbY^{\dagger}_t$.
    \State $\mbY_t\gets\Omega\left(\mbA_t\mbY_t\right)$.
    \State $l\gets l+1$.
    \EndWhile
    \EndFor
    \For{$z=1:K-l-1$}
    \State $\mbA_t\gets\mbY\mbY^{\dagger}_t$.
    \State $\mbY_t\gets\Omega\left(\mbA_t\mbY_t\right)$.
    \EndFor
    \State $\mbu[i]\gets\|\mbY-\mbY_t\|_{\mathrm{F}}^2/d$.
    \EndFor
    \State $\mbe[k-1]\gets\min~\mbu$.
    \EndFor
    \State $\text{YES~bound}\gets\min~\mbe$.
    \State \Return $\text{YES~bound}$
    \end{algorithmic}
\end{algorithm}
\subsubsection{The YES Training Cloud-System for Quality Monitoring}

We now illustrate the integration of the proposed YES bounds with the training process, culminating in the creation of an intuitive training cloud to monitor progress in real time (see Fig.~\ref{figure_cloud}). As the YES bounds evolve over epochs, similar to the training loss, they enable users to visually observe the interaction between training performance and the YES bounds. This visualization allows practitioners to track key moments in the training process, such as when the training loss surpasses the YES-0 bound, when it improves beyond the best YES-$k$ bounds, the epochs at which these transitions occur, and how the bounds and training results interact throughout the process.

 To help users navigate key transitions during the training process, the YES training cloud system employs a color-coding scheme. A training loss that remains above the YES training cloud—depicted as the red area—indicates ineffective training. When the loss reaches and enters the cloud—the  yellow area—it signifies that meaningful training is taking place, with the network weights being substantially influenced by the data. However, caution is still advised at this stage, as the training has not yet achieved optimality. A loss that descends below the cloud into the green area denotes effective training, suggesting a potential for achieving optimal performance.



\begin{figure}[t]
	\centering
	\subfloat[]
		{\includegraphics[width=0.3\columnwidth]{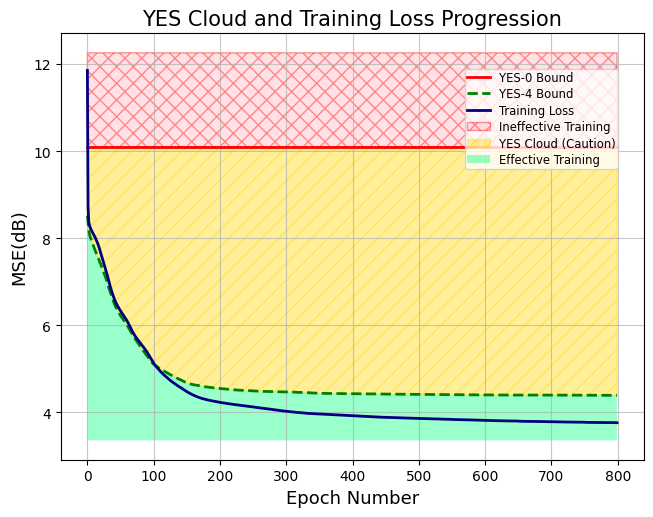}}\quad
    \subfloat[]
		{\includegraphics[width=0.3\columnwidth]{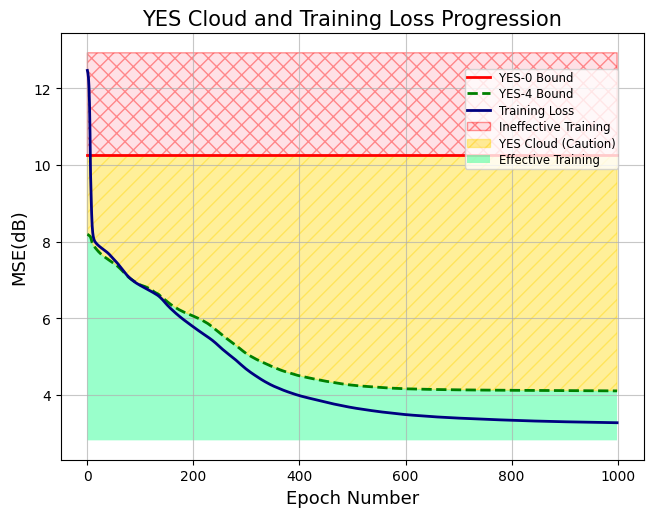}}\quad
    \subfloat[]
		{\includegraphics[width=0.3\columnwidth]{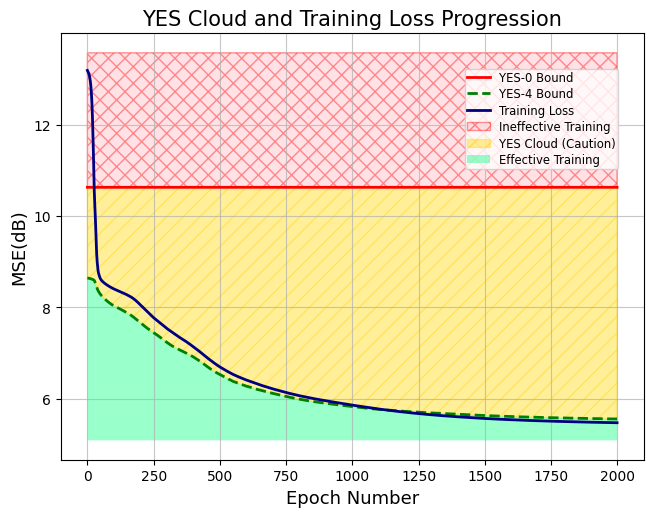}}\\
    \subfloat[]
		{\includegraphics[width=0.3\columnwidth]{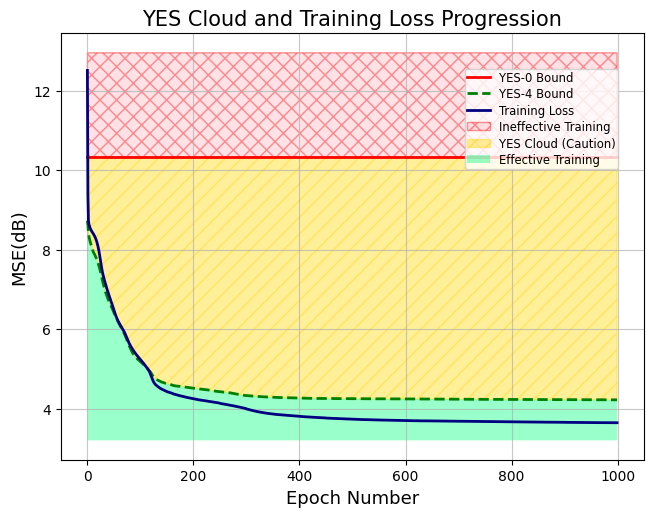}}\quad
    \subfloat[]
		{\includegraphics[width=0.3\columnwidth]{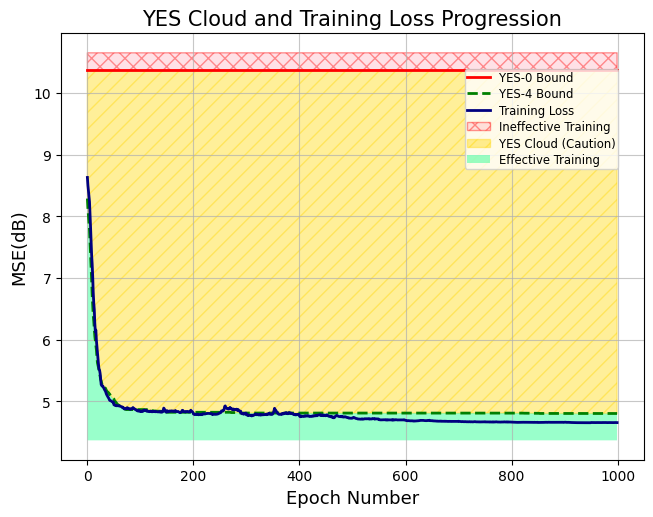}}\quad
    \subfloat[]
		{\includegraphics[width=0.3\columnwidth]{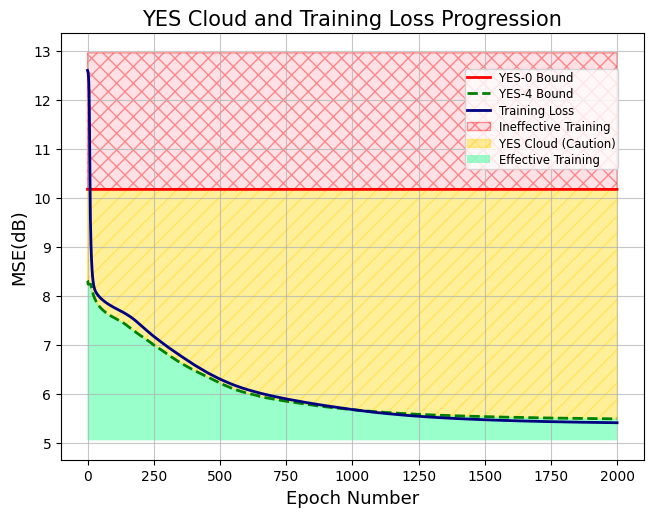}}
	\caption{YES training clouds for the phase retrieval model. The clouds are shown for a fully connected network with $5$ layers, each corresponding to different training parameter settings. Figs~(a)-(c) illustrate the performance of the YES bounds for different batch sizes: $20$, $100$, and $500$, respectively, with a learning rate of $1e-3$. Figs.~(d)-(f) compare the YES bounds to the training process with different learning rates: $1e-3$, $1e-2$, and $1e-4$. As seen in Figs.~(b) and (c), increasing the batch size slows the convergence rate, with the training loss entering the green region after more than $100$ epochs. Interestingly, when adjusting the learning rate to $1e-2$ and $1e-4$, as shown in Figs.~(e) and (f), the training struggles 
 to reach the green region, suggesting that a learning rate of $1e-3$ is the proper parameter for this task. This observation is further supported by comparing the loss functions across Figs.~(d)-(f). 
 Another notable observation in Fig.~(f) is that both the YES bound and the training loss converge relatively closely until the final convergence, 
 indicating that the training solution behaves similarly to a linear projection. 
}
\label{figure_2}
\end{figure}
\begin{figure}[t]
	\centering
	\subfloat[]
		{\includegraphics[width=0.3\columnwidth]{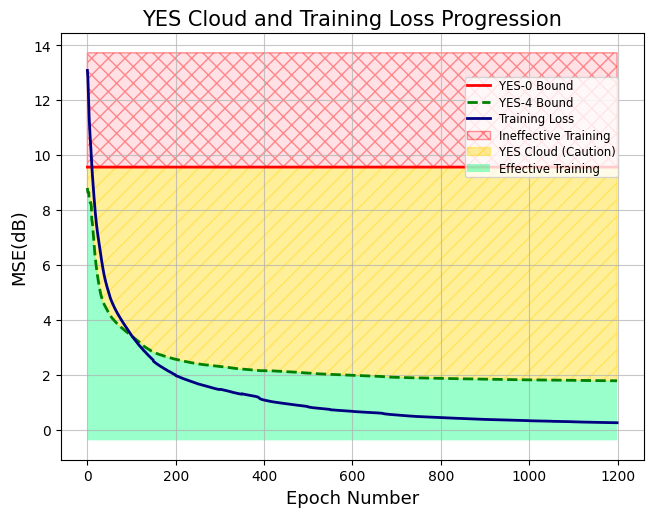}}\quad
    \subfloat[]
		{\includegraphics[width=0.3\columnwidth]{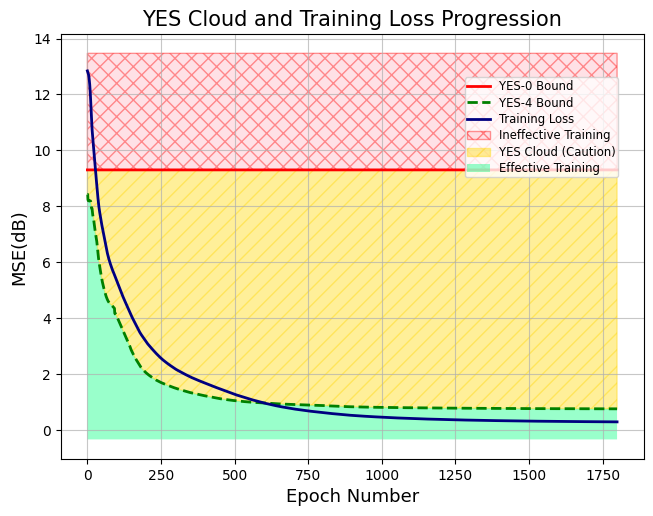}}\quad
    \subfloat[]
		{\includegraphics[width=0.3\columnwidth]{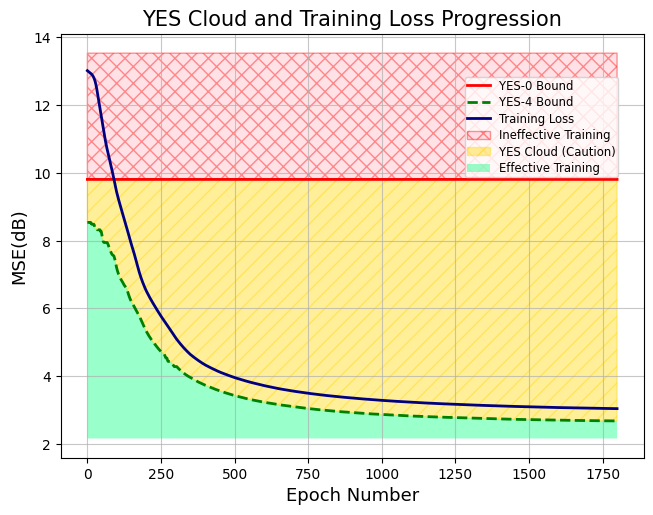}}\\
    \subfloat[]
		{\includegraphics[width=0.3\columnwidth]{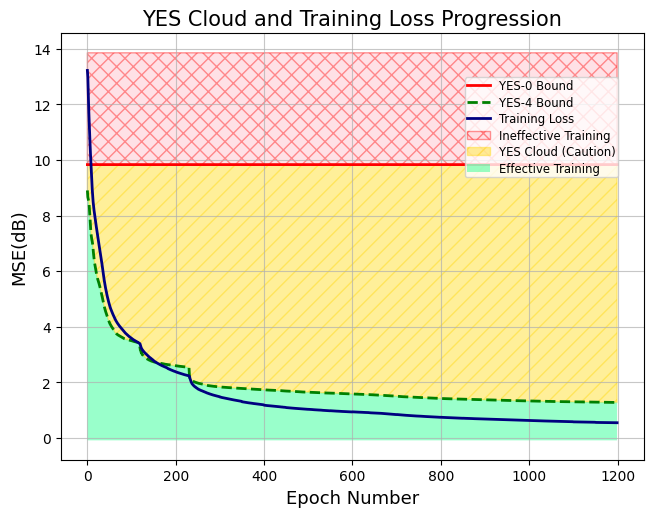}}\quad
    \subfloat[]
		{\includegraphics[width=0.3\columnwidth]{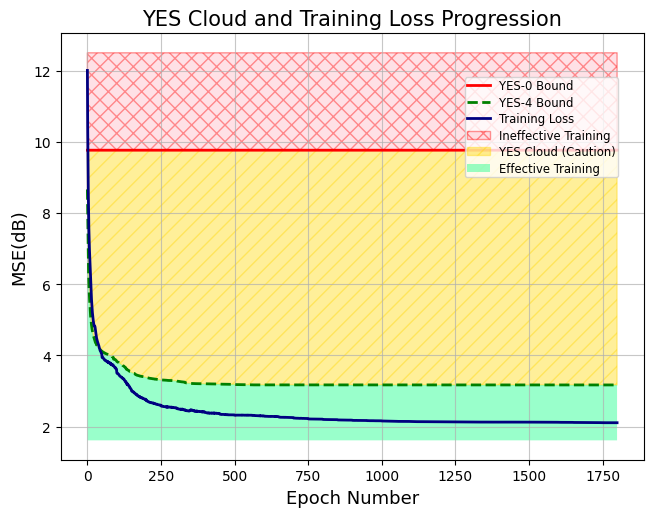}}\quad
    \subfloat[]
		{\includegraphics[width=0.3\columnwidth]{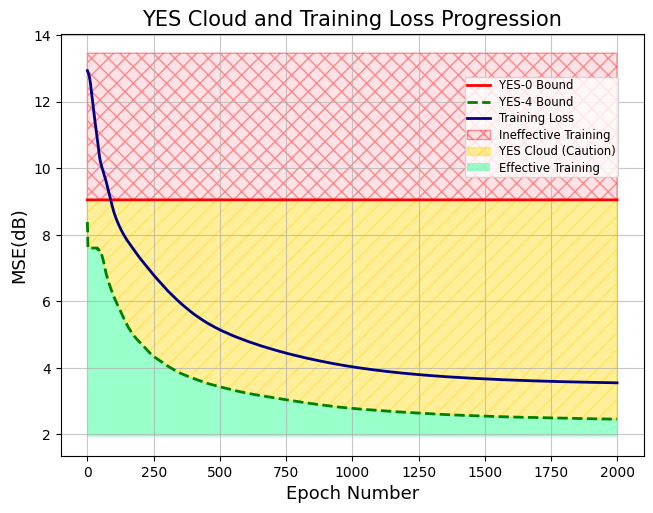}}
	\caption{YES training clouds are utilized for the signal denoising task, displayed for a fully connected network comprising five layers, each representing a unique training parameter configuration. Figs.~(a)-(c) demonstrate the YES bounds' performance across different batch sizes: $20$, $100$, and $500$ with a learning rate of $1e-3$. Lastly, Figs.~(d)-(f) compare the YES bounds against the training process employing varying learning rates: $1e-3$, $5e-3$, and $1e-4$. As shown in Fig.~(b), training with a batch size of $100$ significantly delays reaching the green region, indicating that convergence is faster with a batch size of $20$ than $100$. As shown in Fig.~(c), the training plateaus in the yellow region, indicating that the solution obtained by the network is far from the optimal.
    In Fig.~(e), increasing the learning rate to $1e-2$ accelerates convergence compared to $1e-3$, while in Fig.~(f), the training loss plateaus in the yellow region for the learning rate of $1e-4$.}
\label{figure_3}
\end{figure}
\begin{figure}[t]
	\centering
	\subfloat[]
		{\includegraphics[width=0.3\columnwidth]{pr_1_1.png}}\quad
    \subfloat[]
		{\includegraphics[width=0.3\columnwidth]{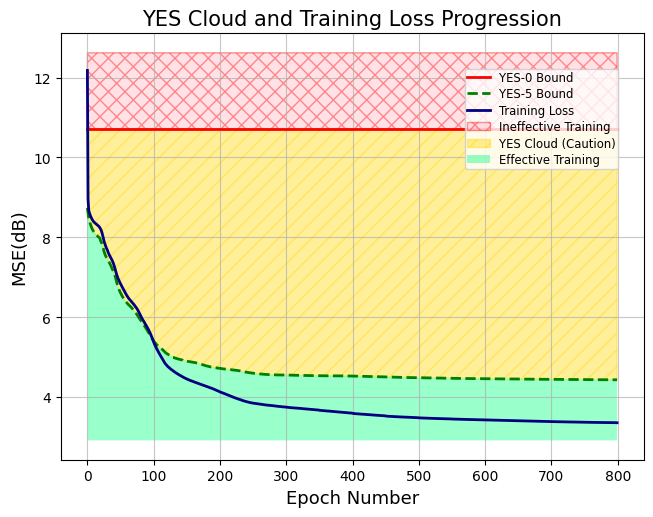}}\quad
    \subfloat[]
		{\includegraphics[width=0.3\columnwidth]{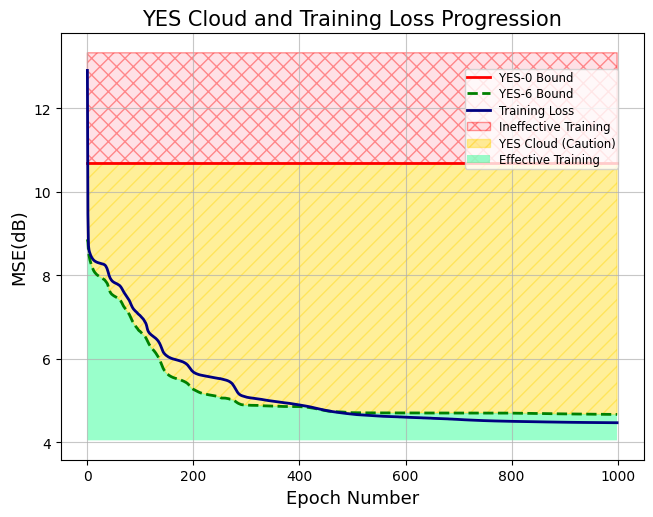}}
	\caption{YES training clouds for networks with different numbers of layers are shown as follows: (a) $5$ layers, (b) $6$ layers, and (c) $7$ layers. As expected, increasing the number of layers makes the optimization task in the training process more complex. Interestingly, in the case of the 7-layer network, the training loss struggles to reach the green region and stays closely aligned with the result
of the YES bound. }
\label{figure_4}
\end{figure}
\section{Numerical Examples and Discussion}
\label{sec5}
In this section, we numerically assess the effectiveness of our proposed YES bounds in evaluating the performance of the training process for both synthetic and real-world data recovery tasks. For the synthetic data, we generated datasets based on two models: phase retrieval and one-dimensional signal denoising. For the real data example, we applied our bounds to real-world image recovery from noisy quadratic measurements and the MNIST classification task. All codes are available at \url{https://github.com/WaveOpt-Lab/YES-bound}.
\subsection{Synthetic Data Recovery Examples}
\label{sec5_1}
To generate the dataset, we employ the following models and configurations:
\begin{itemize}
    \item \emph{Phase Retrieval:} The data is generated by the following model:
    \begin{equation}
    \label{27}
    \mathbf{b}_i = \left| \mathbf{A} \mathbf{x}_i \right|,\quad i\in [d],
    \end{equation}
    where $\mathbf{A}\in\mathbb{R}^{20\times 20}$ is a Gaussian sensing matrix with entries drawn from $\mathcal{N}(0,1/20)$, the signal $\mathbf{x}\in\mathbb{R}^{20}$ is generated as $\mathcal{N}(0,1)$, and $d=1000$ samples are generated.
    \item \emph{One-Dimensional Signal Denoising:} 
    The dataset is constructed using the model:
    \begin{equation}
    \label{28}
    \mathbf{b}_i=\mathbf{x}_i+\mbn_i,\quad i\in [d],
    \end{equation}
    where the signal $\mathbf{x}\in\mathbb{R}^{20}$ is drawn from $\mathcal{N}(0,1)$, and the noise $\mathbf{n}\in\mathbb{R}^{20}$ follows $\mathcal{N}(0,0.2)$. For this model, we generate $50$ fixed signals, each perturbed by $20$ noise vectors sampled from $\mathcal{N}(0,0.2)$. This process is repeated to generate a dataset of $d=1000$ samples.
\end{itemize}
We train five-layer fully connected networks to approximate models under different parameter configurations, utilizing the ADAM algorithm for optimization. In all experiments, the learning rate is initialized at $\eta=\eta_0$ and reduced by a decay factor of $0.9$ every $50$ epochs. It is important to note that the bias term is excluded in the phase retrieval model, whereas it is included in the one-dimensional signal denoising task. We give the following criterion for stopping the training: 
the rate of change in network weights is sufficiently low.

In Fig.~\ref{figure_2}, we present color-coded clouds illustrating the training process and corresponding bounds for the phase retrieval model under various conditions and parameters. Figs.~\ref{figure_2}(a)-(c) show the clouds for different batch sizes, while Figs.~\ref{figure_2}(d)-(f) illustrate the clouds for varying learning rates. Specifically, Fig.~\ref{figure_2}(a) corresponds to a batch size of $20$, Fig.~\ref{figure_2}(b) to a batch size of $100$, and Fig.~\ref{figure_2}(c) to a batch size of $500$, all with a fixed learning rate of $1e-3$. Similarly, Fig.~\ref{figure_2}(d) uses a learning rate of $1e-3$, Fig.~\ref{figure_2}(e) uses $1e-2$, and Fig.~\ref{figure_2}(f) uses $1e-4$, all with a fixed batch size of $20$.

As seen in Figs.~\ref{figure_2}(a)-(b), training enters the green region after approximately $100$ epochs, signaling effective training is in progress. Notably, the convergence rate for the batch size of $20$ is slightly faster than that of $100$, as indicated by the earlier entry into the green region. In Fig.~\ref{figure_2}(c), the model struggles to reach the green region, only doing so after $1250$ epochs, suggesting that batch sizes of $20$ or $100$ are more suitable for this task. Notably, by our real-time bounds, one can notice the ineffectiveness of the chosen parameters in a real-time manner instead of running the model for various parameter settings. 

In terms of learning rate, Fig.~\ref{figure_2}(d) shows that a rate of $1e-3$ leads to effective training, with the loss entering the green region after $100$ epochs. In Fig.~\ref{figure_2}(e), increasing the learning rate to $1e-2$ does not alter the entry point into the green region, but the training loss plateaus closer to the YES bound, implying that the solution aligns with a linear projection—potentially suboptimal in this context. This pattern is consistent in Fig.~\ref{figure_2}(f) with a learning rate of $1e-4$, where the model enters the green region after $1250$ epochs and similarly plateaus near the YES bound. Overall, these observations suggest that $1e-3$ is a better learning rate for this task compared to $1e-2$ and $1e-4$.
\begin{figure}[t]
	\centering
		{\includegraphics[width=0.45\columnwidth]{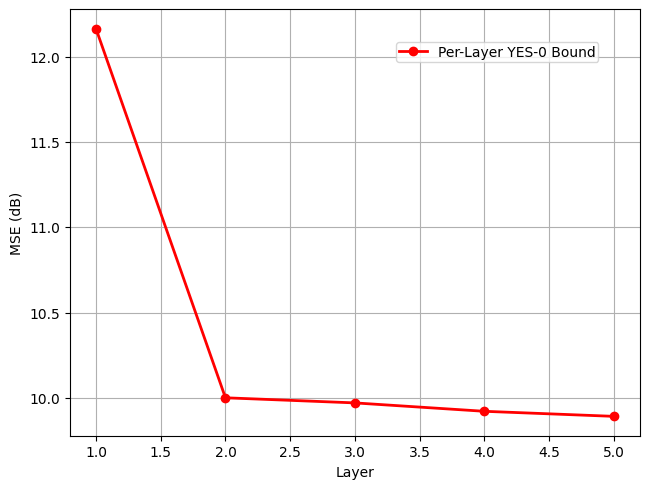}}
	\caption{YES-0 decay with respect to the number of layers.}
\label{figure_5}
\end{figure}

In Fig.~\ref{figure_3}, we present color-coded clouds illustrating the training process and corresponding bounds for the signal denoising model under various conditions and parameters, following a similar approach as in the phase retrieval case. Figs.~\ref{figure_3}(a)-(c) highlight the effects of varying batch sizes, while Figs.~\ref{figure_3}(d)-(f) show the impact of different learning rates. Specifically, Fig.~\ref{figure_3}(a) corresponds to a batch size of $20$, Fig.~\ref{figure_3}(b) to a batch size of $100$, and Fig.~\ref{figure_3}(c) to a batch size of $500$, all with a fixed learning rate of $1e-3$. Figs.~\ref{figure_3}(d)-(f) reflect learning rates of $1e-3$, $5e-3$, and $1e-4$, respectively, using a fixed batch size of $20$.

Comparing Fig.~\ref{figure_3}(a) and Fig.~\ref{figure_3}(b), we see that the training loss with a batch size of $20$ converges faster than with a batch size of $100$, as evidenced by the earlier entry into the green region. Although the results of both batch sizes ultimately reach similar error levels, the YES bound for the batch size of $100$ is closer to the training loss, a characteristic of our real-time bounds, which are constructed from specific intermediate instances along the optimization path. In Fig.~\ref{figure_3}(c), the training loss plateaus in the yellow region, indicating that the training is influenced by the data but remains suboptimal, failing to enter the green region.

Turning to the impact of learning rates, Fig.~\ref{figure_3}(d) shows that increasing the rate to $1e-2$ accelerates convergence compared to $1e-3$. However, in Fig.~\ref{figure_3}(f), with a learning rate of $1e-4$, the training loss again plateaus in the yellow region, failing to reach the green region and signaling suboptimal performance. These insights into training effectiveness are achieved in real time, without requiring multiple parameter comparisons retrospectively, providing an immediate and clear assessment of the training process. It is important to observe that the bound is more capable than just a local optimality check. For example, you can generate a number of random points around the current weights and plot the minimum loss curve associated with them, noting that for local optimality we should be on that curve. However, one can easily see that the YES bound goes further, as observed in Figs.~\ref{figure_3}(c),(f), the training losses converge in the yellow.

To check the performance of the YES bounds across different layers, we train fully connected networks for the phase retrieval model, with results shown in Fig.~\ref{figure_4}. Specifically, Fig.~\ref{figure_4}(a) illustrates the cloud for a fully connected network with $5$ layers, Fig.~\ref{figure_4}(b) for $6$ layers, and Fig.~\ref{figure_4}(c) for $7$ layers. As observed, the convergence rate of $7$-layer network appears to slow down under the same parameter settings. For example, in Fig.~\ref{figure_4}(b), the training takes nearly $100$ epochs to reach the green region, whereas in Fig.~\ref{figure_4}(c), the loss enters the green region after more than $400$ epochs. This suggests that real-time monitoring using YES bounds can quickly indicate when the chosen parameters may be suboptimal for the given model architecture.

In Fig.~\ref{figure_5}, we validate Theorem~\ref{theorem1} by demonstrating the decay of the YES-0 bound across layers. This result is based on the phase retrieval model.
YES training bounds with different degrees, without imposing monotonicity, are shown in Fig.~\ref{figure_6} for the phase retrieval model. An interesting observation from this figure is that increasing the degree does not necessarily improve the YES bounds. In fact, all the bounds remain relatively close to each other. 
Even for the YES bounds with monotonicity, as illustrated in Fig.~\ref{figure_7} with various initializations, it is evident that the YES bounds are closely grouped. We investigated this observation using fully-connected networks with both 5-layer and 7-layer architectures, conducted this experiment $1000$ times, and consistently observed similar results. This observation suggests that we may leverage the advantages of higher-degree YES bounds by calculating only the first few YES-$k$ bounds, which could be beneficial from a computational standpoint.

\begin{figure}[t]
	\centering
		{\includegraphics[width=0.45\columnwidth]{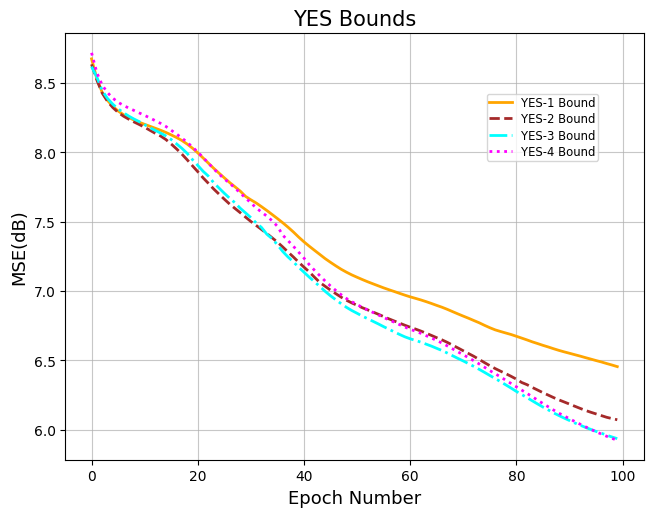}}
	\caption{YES training bounds with varying degrees, without imposing monotonicity, are presented. As can be observed, increasing the degree of the bound does not necessarily improve it, as the bounds remain closely aligned with each other.}
\label{figure_6}
\end{figure}

\begin{figure}[t]
	\centering
	\subfloat[]
		{\includegraphics[width=0.3\columnwidth]{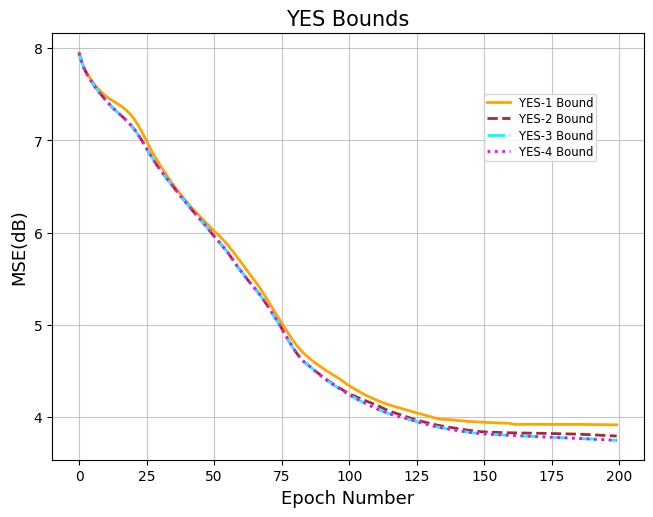}}\quad
    \subfloat[]
		{\includegraphics[width=0.3\columnwidth]{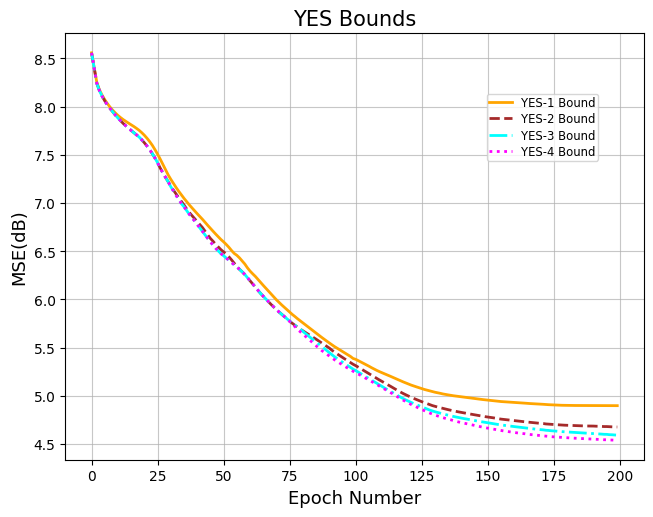}}\quad
    \subfloat[]
		{\includegraphics[width=0.3\columnwidth]{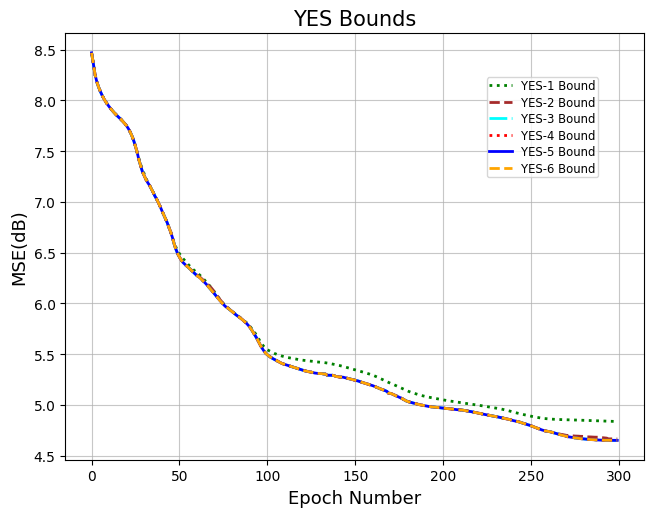}} \quad
	\caption{YES training bounds with varying degrees, this time incorporating monotonicity across different initializations, are presented. Similar to the non-monotonic case, increasing the degree of the bound does not necessarily enhance it, as the bounds remain close to each other.}
\label{figure_7}
\end{figure}

\begin{figure}[t]
	\centering
		{\includegraphics[width=0.9\columnwidth]{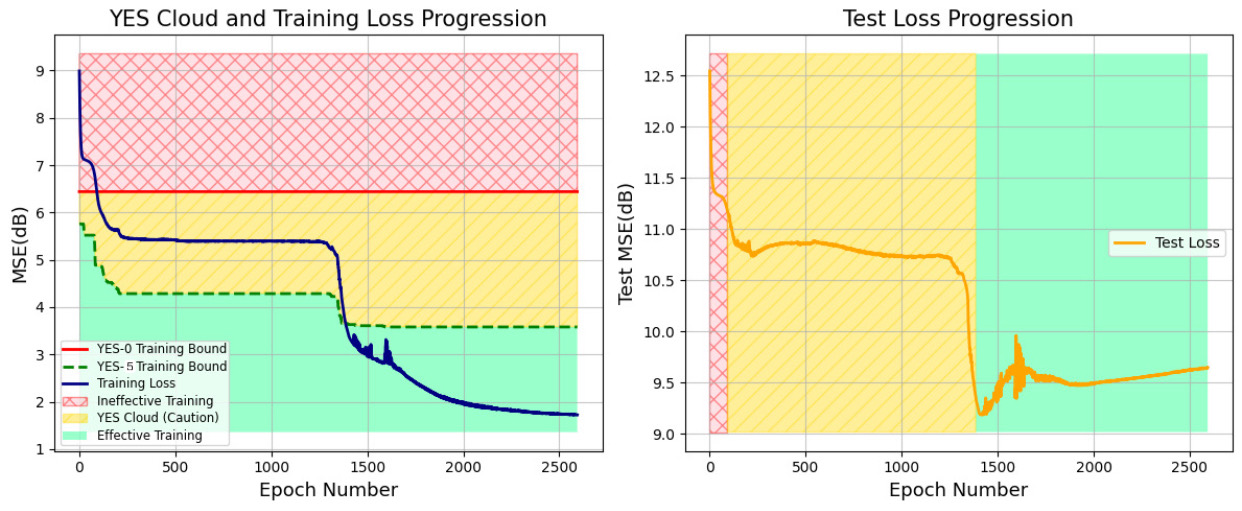}}
	\caption{The YES bounds cloud for the training process is presented alongside the test stage for the same training results monitored by the YES training bounds.}
\label{figure_8}
\end{figure}

It is often a challenge for both users and AI professionals to determine the optimal point to stop training (cost vs performance trade-off). A common approach is to monitor the rate of change in the loss function, waiting for the loss to plateau as a sign of potential convergence. However, as shown in Fig.~\ref{figure_8}, the training objective of neural networks (and our training procedures) can result in the loss appearing to plateau multiple times before quality training is achieved. This is where the YES clouds come to the rescue: \emph{To know when the smaller rate of change in the loss or the weights of the neural network does not equate optimality, and when to act on it---i.e., when we are in the green.}

Beyond the training process, it is insightful to investigate how test results behave as the training progresses through different regions of the color-coded clouds. To explore this, we present both training and test outcomes for the phase retrieval model in Fig.~\ref{figure_8}. As observed, when the training loss decreases in the red region, the test loss similarly declines. When training plateaus in the yellow region, the test loss also plateaus. Interestingly, upon entering the green region, the training loss initially exhibits fluctuations, likely due to the learning rate, before plateauing—a pattern mirrored in the test loss. However, after approximately $2000$ epochs, the test loss increases.




One scenario in which the YES bounds and the associated training cloud are particularly instrumental is in the detection of lingering errors during the training process. Specifically, when inaccuracies arise in the gradient computations---perhaps due to small batch sizes leading to high variance in stochastic gradient estimates, or the use of a non-vanishing learning rate---the optimizer may fail to converge precisely to the optimum. Instead, the training dynamics may induce oscillations around the optimal point. This phenomenon introduces a steady-state error that maintains a non-negligible distance between the empirical loss \( \mathcal{L} \) and its minimal attainable value \( \mathcal{L}_{\text{opt}} \).
In cases where this steady-state error is significant, the training loss may remain above the lower boundary of the YES training cloud, residing within the intermediate (yellow) region. This occurrence indicates that the residual error \( \mathcal{L} - \mathcal{L}_{\text{opt}} \) exceeds the gap between the YES-K bound \( B_{\text{YES-K}} \) and the optimal loss value \( \mathcal{L}_{\text{opt}} \). Consequently, the YES bounds serve as a diagnostic tool, revealing that the training process is impeded by persistent errors that prevent convergence to optimality. Detecting such discrepancies prompts practitioners to investigate and rectify underlying issues in the training regimen, such as improving the accuracy of gradient estimates or adjusting hyperparameters, like the learning rate, to facilitate convergence to the desired optimum.

\subsection{Image Recovery From Corrupted Quadratic Model}
\label{sec5_2}
To elucidate the practical significance of the YES bounds and their associated cloud system, we further examine an image recovery task for an image degradation process characterized by the model:
\begin{equation}
\label{29}
\mathbf{b}_i=\left| \mathbf{A} \mathbf{x}_i \right|^2 + \mathbf{n}_i,\quad i\in [d],
\end{equation}
where each $\mathbf{x}_i$ represents a distinct patch of the original image undergoing recovery. This model presents a complex degradation process that involves three primary challenges:

\emph{1. Blurring Operation ($\mathbf{A} \mathbf{x}_i$):} The matrix $\mathbf{A}$ applies a blurring operator to the image patch $\mathbf{x}_i$, necessitating deblurring techniques to counteract the smoothing effects.

\emph{2. Phase Loss ($\left| \cdot \right|^2$):} The absolute value squared operation results in phase loss, requiring phase retrieval methods to restore essential phase information for accurate reconstruction.

\emph{3. Additive Noise ($\mathbf{n}_i$):} The term $\mathbf{n}_i$ introduces additive noise, demanding denoising strategies to mitigate its adverse effects on image quality.

The performance of the proposed YES bounds and cloud system is illustrated in Fig.~\ref{figure_9} and Fig.~\ref{figure_10}, which present the model's recovery performance under four distinct conditions:
\begin{figure}[t]
	\centering
	\subfloat[]
		{\includegraphics[width=0.25\columnwidth]{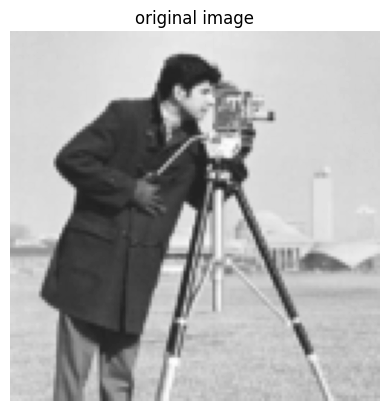}}\quad
    \subfloat[]
		{\includegraphics[width=0.328\columnwidth]{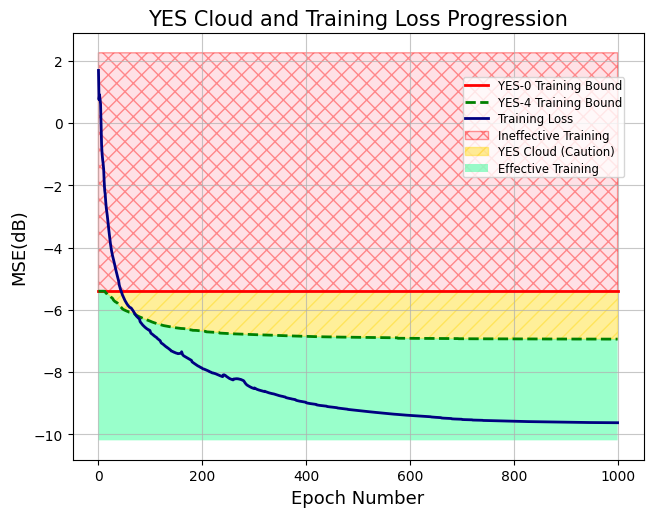}}\quad
    \subfloat[]
		{\includegraphics[width=0.25\columnwidth]{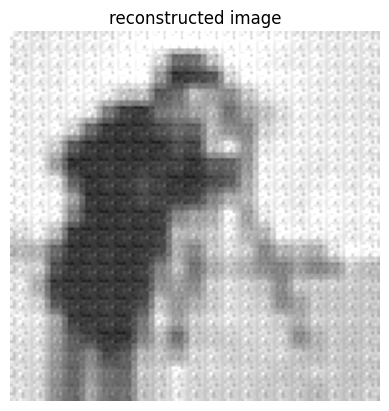}}\\
    \subfloat[]
		{\includegraphics[width=0.25\columnwidth]{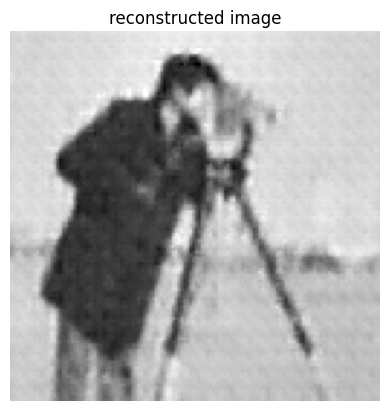}}
\hspace{0.8cm}
    \subfloat[]
		{\includegraphics[width=0.25\columnwidth]{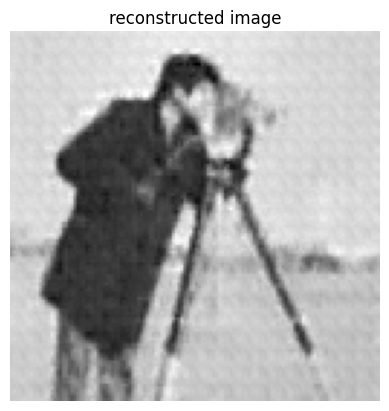}}\hspace{0.8cm}
    \subfloat[]
		{\includegraphics[width=0.25\columnwidth]{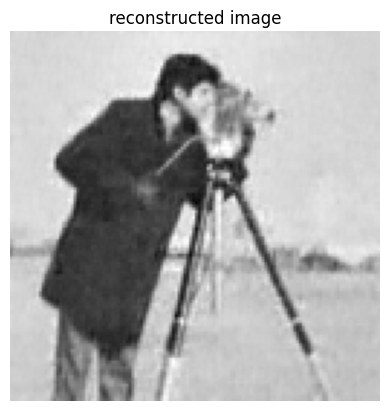}}\\
	\caption{Monitoring the training process for the $5$-layer fully-connected network used to reconstruct the cameraman image, as shown in (a), from the corrupted phase retrieval model, with the YES bounds cloud illustrated in (b). The quality of the reconstructed image is presented at different stages of training: (c) at the initial training loss, (d) as the training loss enters the cautionary yellow region, (e) when it reaches the green region, and (f) when the training loss converges to the final solution in the green region. As observed from the reconstructed images, the training performance improves progressively as the loss moves from the yellow region to the green region, achieving the best performance upon convergence. This demonstrates the effectiveness of the YES bounds cloud in monitoring the training process, even for tasks like image denoising.}
\label{figure_9}
\end{figure}
\begin{figure}[t]
	\centering
	\subfloat[]
		{\includegraphics[width=0.25\columnwidth]{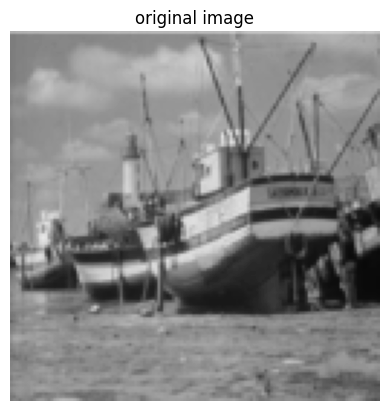}}\quad
    \subfloat[]
		{\includegraphics[width=0.328\columnwidth]{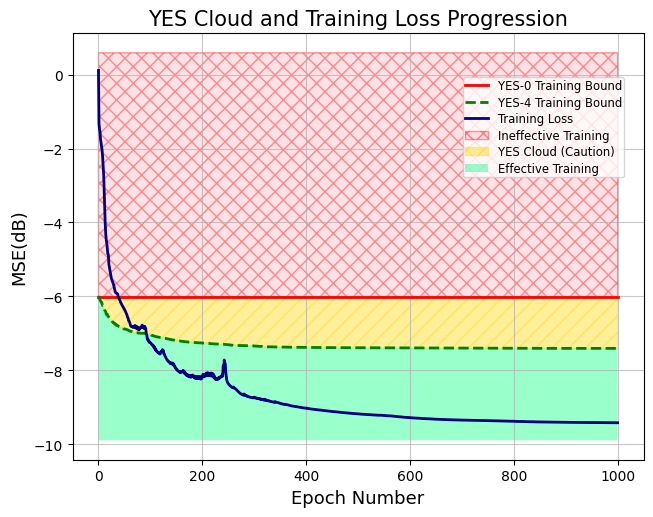}}\quad
    \subfloat[]
		{\includegraphics[width=0.25\columnwidth]{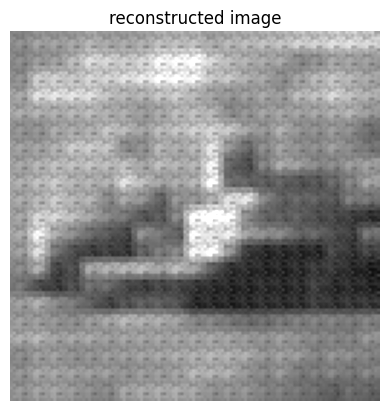}}\\
    \subfloat[]
		{\includegraphics[width=0.25\columnwidth]{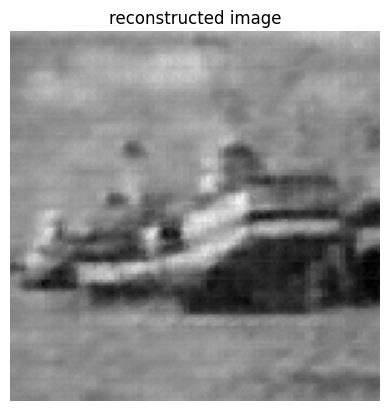}}\hspace{0.8cm}
    \subfloat[]
		{\includegraphics[width=0.25\columnwidth]{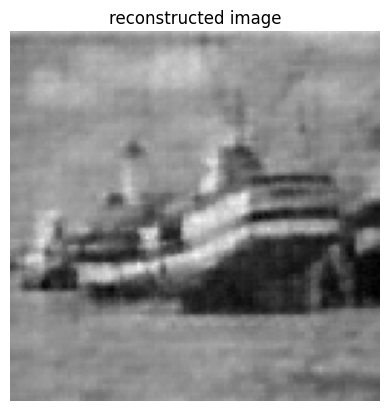}}\hspace{0.8cm}
    \subfloat[]
		{\includegraphics[width=0.25\columnwidth]{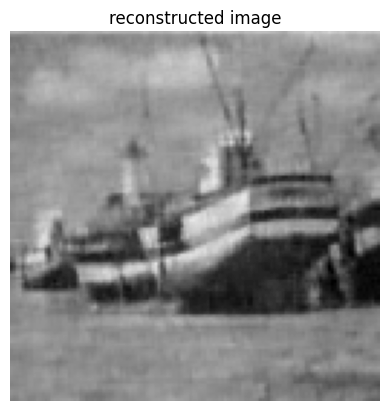}}\\
	\caption{The training process of the $5$-layer fully-connected network, which was utilized to reconstruct the boat image from a corrupted phase retrieval model, was closely monitored. The YES bounds cloud, depicted in (b), guided the process. The progression of the image quality through various training stages is showcased: (c) at the initial training loss, (d) as the loss enters the cautionary yellow region, (e) upon reaching the green region, and (f) when the loss stabilizes in the green region, indicating the final solution. The reconstructed images clearly show that as the training loss transitions from yellow to green, the performance steadily improves, culminating in optimal performance at convergence.}
\label{figure_10}
\end{figure}
\begin{itemize}
    \item \emph{Training Loss at Initial Value:} At the outset, the training loss is significantly higher than the YES bounds, indicating suboptimal performance. The recovery results at this stage exhibit pronounced blurring, substantial phase distortions, and noticeable noise artifacts, reflecting the model's nascent state.
    
    \item \emph{Training Loss at YES-0 (Top of the Cloud):} As training progresses, the loss approaches the YES-0 bound—the top of the cloud. At this juncture, the model achieves a moderate level of recovery, with reduced blurring and phase errors, alongside diminished noise. However, the performance remains below optimal, as indicated by the fact that the training loss has not yet breached the lower bounds of the cloud.
    
    \item \emph{Training Loss at YES-$(K-1)$ (Bottom of the Cloud):} Further training brings the loss down to the YES-$(K-1)$ bound—the bottom of the cloud. The recovery results at this stage demonstrate significant improvements, with minimal blurring, accurate phase reconstruction, and negligible noise. This indicates that the model is nearing the performance limits as prescribed by the YES bounds.
    
    \item \emph{Optimized Convergence:} Upon convergence, the training loss reaches its optimal value, falling within the YES bounds. The recovery results are exemplary, showcasing precise deblurring, flawless phase retrieval, and excellent noise suppression. This final stage confirms that the model has achieved a state of optimal performance, as validated by the YES bounds.
\end{itemize}
To numerically validate the training results monitored by the YES bounds cloud, we apply these bounds to a corrupted phase retrieval model using two different images: the $128\times 128$ cameraman and the boat, where we consider the patch size of $8\times 8$ of these images for the model in \eqref{29}.
These images help assess the effectiveness of the YES cloud across various regions and determine whether the training loss entering the green region can indeed lead to practical solutions for real-world tasks, such as image denoising. Figs.~\ref{figure_9} and \ref{figure_10}(a) show the original image of the cameraman and the boat, respectively, Figs.~\ref{figure_9} and \ref{figure_10}(b) present the YES cloud used for tracking the training loss with the YES bounds, Figs.~\ref{figure_9} and \ref{figure_10}(c) illustrate the initial results when the training loss is in the red region, Figs.~\ref{figure_9} and \ref{figure_10}(d) show the output as the loss enters the cautionary yellow region, Figs.~\ref{figure_9} and \ref{figure_10}(e) depict the outcomes when the training loss reaches the green region, indicating effective training according to the YES bounds, and finally, Figs.~\ref{figure_9} and \ref{figure_10}(f) shows the point at which the training loss converges. As illustrated in Fig.~\ref{figure_9}(c) and Fig.~\ref{figure_10}(c), the quality of the reconstructed images at the initial stage of training is poor, with noticeable noise, blurring, and patch artifacts. However, in Fig.~\ref{figure_9}(d) and Fig.~\ref{figure_10}(d), these imperfections are reduced compared to the earlier stage. Interestingly, by the time the training loss enters the green region, as shown in Fig.~\ref{figure_9}(e) and Fig.~\ref{figure_10}(e), the background of the images appears much clearer, with a significant reduction in noise and blurring artifacts. Finally, once the training fully converges in the green region, we observe a well-reconstructed input version, with most distortions effectively removed.

Note that while dropping below the YES cloud signals entering into the effective training regime, it is certainly not recommended to stop the training once this occurs. In fact, it would be reasonable to continue the training, e.g., as long as the rate of decrease in the loss is satisfactorily large. However, we show in this example that one may expect satisfactory performance even if they stop the training prematurely in the green. Based on this, we recommend the following criteria for stopping the training: 1) in the green,
2) the rate of change in network weights is sufficiently low.
\subsection{MNIST Classification}
\label{sec5_3}
To further assess the performance of our YES bounds in practical scenarios, we conducted experiments using the MNIST dataset, which was designed for classification tasks. We worked with $5000$ training and $5000$ test samples. Each image, representing a digit $i\in\{0,\cdots,9\}$, was encoded by generating a zero matrix with the same dimension as the input image with a single $1$ placed at $(i+1,i+1)$. A $5$-layer fully connected network was trained with SGD, using an initial learning rate $\eta_0$ and a decay factor of $0.7$ every $50$ epochs. The classification was performed by minimizing the MSE between model outputs and encoded images, with the success rate determined by accurate classifications over the entire dataset.

As shown in Fig.~\ref{figure_11}(a), with a learning rate of $1e-4$, the training loss struggles to move beyond the caution region and remains close to the bottom of the YES clouds. In terms of success rates, Fig.~\ref{figure_11}(b) displays the training process, while Fig.~\ref{figure_11}(c) presents the test stage. Although the performance appears satisfactory, the YES cloud suggests that the model's solution is akin to a linear projection, indicating suboptimal training parameters. Adjusting these parameters could lead to improved model performance.

In Fig.~\ref{figure_11}(d), we apply a learning rate of 
$5e-4$ for the solver. In this case, the training loss reaches the green region after approximately $30$ epochs. The success rate for the training results, shown in Fig.~\ref{figure_11}(e), indicates that when the training loss enters the yellow region, the success rate is $85$ percent. Once it enters the green region, the success rate increases to $95$ percent, and at the convergence point, we achieve $100$ percent performance. For the test results depicted in Fig.~\ref{figure_11}(f), the loss reaches the yellow region at $83$ percent, and upon entering the green region, the success rate becomes $92$ percent. At the convergence rate, the test results reach $95$ percent.
As discussed earlier, when the training loss plateaus in the green region, the model's solution can be a strong candidate for optimality. Figs.~\ref{figure_11}(e) and (f) illustrate the model's effectiveness, achieving a $100\%$ success rate in training and $95\%$ in testing. This demonstrates the model's high accuracy and generalization, indicating that it is well-tuned to the task at hand.

\begin{figure}[t]
	\centering
	\subfloat[]
		{\includegraphics[width=0.3\columnwidth]{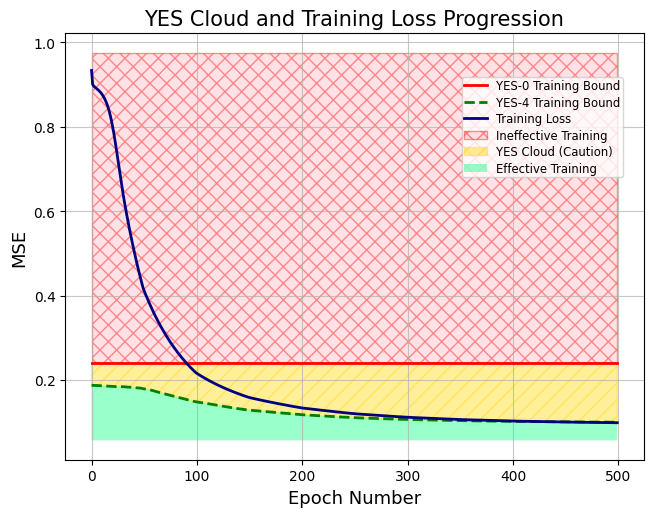}}\quad
    \subfloat[]
		{\includegraphics[width=0.3\columnwidth]{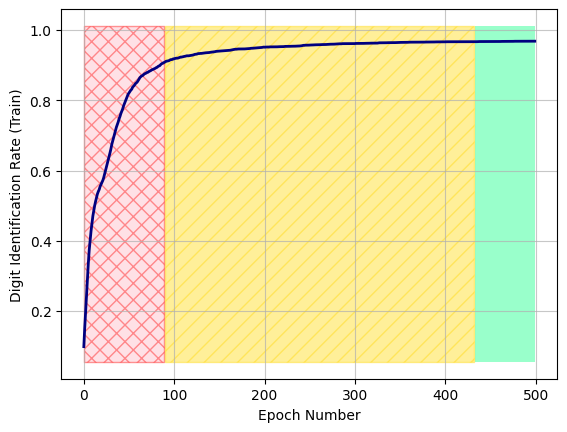}}\quad
    \subfloat[]
		{\includegraphics[width=0.3\columnwidth]{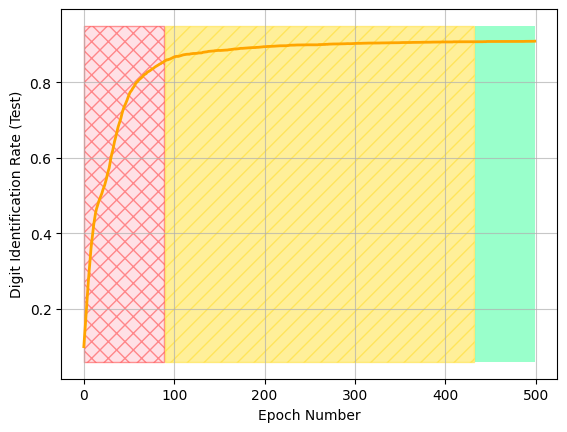}}\\
    \subfloat[]
		{\includegraphics[width=0.3\columnwidth]{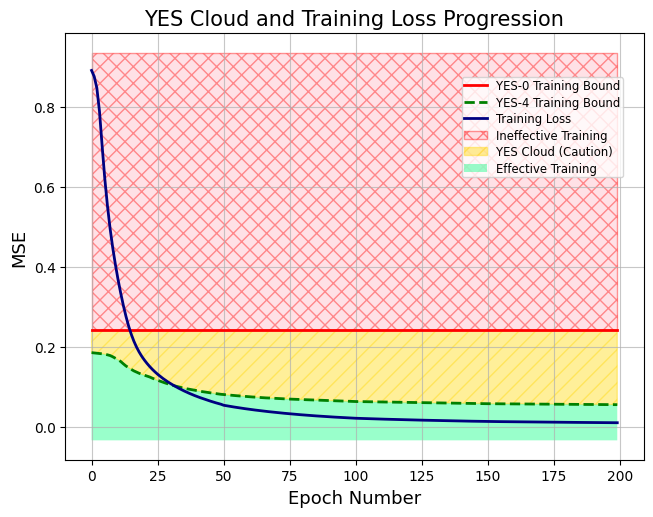}}\quad
    \subfloat[]
		{\includegraphics[width=0.3\columnwidth]{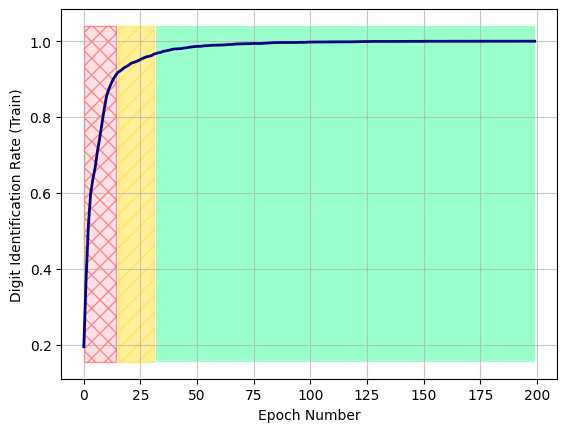}}\quad
    \subfloat[]
		{\includegraphics[width=0.3\columnwidth]{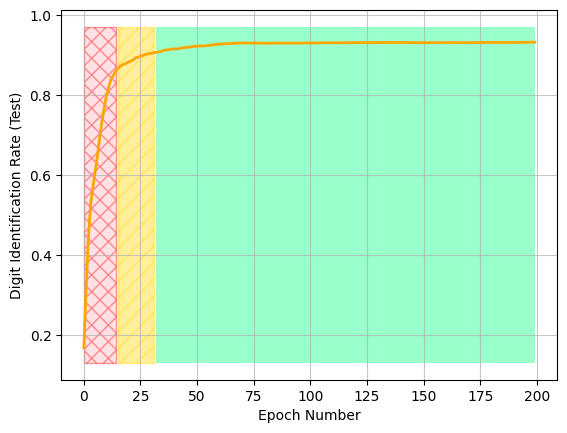}}
	\caption{The YES bounds cloud for the training process on the MNIST dataset is presented alongside the success rates for both the training and testing stages. Figs.~(a) and (d) show the YES clouds for solvers with different learning rates: (a) $1e-4$ and (b) $5e-4$. Figs.~(b) and (c) display the success rates during the training and testing phases, respectively, within the color-coded YES cloud regions. These figures demonstrate how effectively the YES bounds monitor solver performance using a learning rate of $1e-4$.
    Figs.~(e) and (f) illustrate the success rates within the YES cloud regions for the solver using a learning rate of $5e-4$.
}
\label{figure_11}
\end{figure}

\section{Beyond Certification: Can We Guide the Training Process?}
\label{sec6}
It is our understanding that the YES bounds and their associated clouds provide not only a mathematical framework for certifying AI training but may also aid practitioners as a method to guide the training process. To harness this dual utility without compromising the integrity of the certification process, it is imperative to maintain a clear information barrier between training and certification. This separation ensures that the training algorithm does not gain access to the YES bound data or the certification network weights. Allowing such access would undermine the certification's purpose by enabling the training process to exploit the bound information, leading to several adverse consequences:
\begin{itemize}
    \item \emph{Loss of Randomization Benefits:} Randomization, particularly during initialization and throughout training, plays a crucial role in escaping local minima and ensuring robust convergence. If the training process can access YES bound data and network weights, it may inadvertently eliminate these randomization benefits, resulting in deterministic and potentially suboptimal training trajectories.
    \item \emph{Faulty Optimization Directions:} The training algorithm might adopt step directions that do not align with the true optimization landscape. Since there is no guarantee that the certification network weights resemble the optimum, leveraging these weights could steer the training process in misleading directions, ultimately degrading the quality of the trained model.
    \item \emph{Obsolescence of the Certification Test:} The primary purpose of the certification test is to provide a reliable bound on the network's training performance. If the training process can consistently operate at or below this bound by utilizing certification data, the test will be rendered ineffective.
\end{itemize}

To mitigate the risks associated with direct access to certification data, it is essential to devise mechanisms that allow the training process to benefit from the YES bounds without exposing the certification test itself. One effective strategy is to share only the \emph{test results}, such as those visualized through the YES cloud, rather than the underlying certification data or network weights. This approach provides the training algorithm with a lower bound on the distance between the current loss and the optimal loss without revealing any specific information about the certification criteria. Specifically, the distance of the current loss from the bottom of the YES cloud serves as a valuable indicator for adjusting the learning rate:
\begin{eqnarray}
   d_k = \max \{\mathcal{L}_k - \mathcal{L}_{\text{YES}} , 0 \}, 
\end{eqnarray}
where $\mathcal{L}_k$ is the current loss at epoch $k$, and $\mathcal{L}_{\text{YES}}$ represents the lower bound provided by the YES cloud. This distance $d_k$ may inform the training process on how large of a step size could be chosen to make meaningful progress, ensuring that the learning rate adapts dynamically based on the proximity to the optimal loss. 
A natural implementation of this guidance mechanism involves defining an adaptive learning rate that incorporates both the traditional vanishing component and an additional term based on the distance $d_k$ to the YES bounds.

\section{Conclusion}
\label{sec7}
One may wonder why the YES bounds work as they do. The reason is simple: heuristics outperform random, and optimal beats heuristic. In mathematics, many effective bounds emerge
from insightful heuristics. The YES training bounds are similarly grounded in solid mathematical principles—specifically, that neural networks, with their non-linearities, should outperform linear projections. These bounds provide a useful heuristic that separates the wheat from the chaff, the random from the meaningful. This is particularly evident with the YES-0 bound. But even with the more sophisticated YES bounds and the associated training cloud, the principle holds: they expose the randomness (in the sense of not sufficiently taking the
shape of the optimal) in the solution by offering a better real-time heuristic. Ultimately, as the process nears optimality, we witness the final convergence—optimal surpasses heuristic. This transition is marked by entering the green zone beneath the training cloud. 

It is worthwhile to note that the YES bounds and their certification should be regarded as a necessary condition for optimality, not a sufficient one. They provide a certificate of potential optimality—a certificate one must certainly have to assert quality in training. They are, however, a concrete step toward bridge the gap between the opaque nature of neural networks and the pressing need for transparency and reliability in high-stakes applications. 

Future work may explore the extension of the YES bounds to various network architectures and training paradigms, as well as their integration into automated training systems. Developing more sophisticated bounds and understanding their impact on the AI training ecosystem are promising avenues for research.

\section*{Acknowledgment}
This work relates to Department of Navy Award N00014-22-1-2666 issued by the Office of Naval Research, Mathematical and Resource Optimization Program. Any opinions, findings, and conclusions or recommendations expressed in this material are those of the authors and do not necessarily reflect the views of the Office of Naval Research.
\bibliographystyle{unsrt}  
\bibliography{references}

\end{document}